\newtheorem{thm}{Theorem}[section]
\newtheorem{proposition}[thm]{Proposition}
\newtheorem{lemma}[thm]{Lemma}
\newtheorem{cor}[thm]{Corollary}
\newtheorem{example}[thm]{Example}
\newtheorem{assumption}[thm]{Assumption}
\newcommand{\N}{\mathbb{N}}
\newcommand{\Z}{\mathbb{Z}}
\newcommand{\R}{\mathbb{R}}
\newcommand{\E}{\mathbb{E}}
\newcommand{\Prob}{\mathbb{P}}
\newcommand{\LO}{\mathcal{L}}
\renewcommand{\epsilon}{\varepsilon} 
\renewcommand{\Im}{\operatorname{Im}}
\newcommand{\sign}{\ensuremath{\mathrm{sign}}} 
\DeclareMathOperator*{\argmin}{arg\,min}
\newcommand{\tr}{\operatorname{tr}} 
\newcommand{\norm}[1]{\left\|#1 \right\|}
\def\<{\langle}
\def\>{\rangle}
\newcommand{\lY}{\langle}
\newcommand{\lK}{\langle}
\newcommand{\lX}{\langle}
\newcommand{\rY}{\rangle_{Y}}
\newcommand{\rX}{\rangle}
\newcommand{\Se}{\Sigma_{\varepsilon}}
\newcommand{\vz}{z} 
\newcommand{\vzv}{{\mathbf{\vz}}} 
\newcommand{\Bs}{B} 
\newcommand{\BOp}{B^\star}
\newcommand{\BS}{\widehat{B}}
\newcommand{\wh}[1]{\widehat{#1}}
\newcommand{\wt}[1]{\widetilde{#1}}
\newcommand{\eps}{\varepsilon}
\title{Learning sparsity-promoting regularizers \\ for linear inverse problems}
\author{Giovanni S.~Alberti\footnotemark[1]\thanks{MaLGa Center, University of Genoa, Italy (\{giovanni.alberti, ernesto.devito, matteo.santacesaria\}@unige.it).} \and Ernesto De Vito\footnotemark[1] \and Tapio Helin\thanks{Computational Engineering, Lappeenranta-Lahti University of Technology,
Finland (tapio.helin@lut.fi).} \and Matti Lassas\thanks{Department of Mathematics and Statistics, University of Helsinki, Finland ({matti.lassas@helsinki.fi}).} \and Luca Ratti\thanks{Department of Mathematics, University of Bologna, Italy
({luca.ratti5@unibo.it}).} \and Matteo Santacesaria\footnotemark[1]}
\date{ }
\begin{document}

\maketitle

\begin{abstract}
This paper introduces a novel approach to learning sparsity-promoting regularizers for solving linear inverse problems. We develop a bilevel optimization framework to select an optimal synthesis operator, denoted as \( B \), which regularizes the inverse problem while promoting sparsity in the solution. The method leverages statistical properties of the underlying data and incorporates prior knowledge through the choice of \( B \). We establish the well-posedness of the optimization problem, provide theoretical guarantees for the learning process, and present sample complexity bounds. The approach is demonstrated through theoretical infinite-dimensional examples, including compact perturbations of a known operator and the problem of learning the mother wavelet, and through extensive numerical simulations. This work extends previous efforts in Tikhonov regularization by addressing non-differentiable norms and proposing a data-driven approach for sparse regularization in infinite dimensions.
\end{abstract}
{\bf Keywords:}
inverse problems, statistical learning, bilevel optimization, operator learning, sparsity-promoting regularization.\\

\section{Problem formulation and main contributions}
\label{sec:intro}
Consider a linear inverse problem 
\begin{equation}
y = Ax + \varepsilon,
    \label{eq:invprob}
\end{equation}  
where we assume that $A\colon X \rightarrow Y$ is a linear bounded operator between the real and separable Hilbert spaces $X,Y$, whereas the inverse $A^{-1}$ (if it exists) can be in general an unbounded operator.

We introduce a variational strategy \cite{engl1996} to regularize the inverse problem \eqref{eq:invprob}, and together promote the sparsity of the solution with respect to a suitable basis or frame. To do so, we consider an operator $B \in \mathcal{L}(\ell^2,X)$ (where $\mathcal{L}(\ell^2,X)$ denotes the space of bounded linear operators from the sequence space $\ell^2(\N)$ to the Hilbert space $X$, equipped with the operator norm) and define the following minimization problem:
\begin{equation}
\hat x_B = \Bs \hat u_B, \quad \hat u_B = \argmin_{u \in \ell^2} \Big\{ \frac{1}{2}\| \Se^{-1/2} A\Bs u \|_Y^2 - \lK y, \Se^{-1}A\Bs u \rY + \| u \|_{\ell^1} \Big\},
\label{eq:xhat}
\end{equation}
where $\Se$ is the covariance of the noise $\epsilon$ (see Assumption~\ref{ass:stat} below).
In Section \ref{sec:optimization}, we introduce a set of assumptions that guarantee the well-definedness and well-posedness of problem \eqref{eq:xhat}.

To better interpret the proposed regularization strategy, we remark that in a finite-dimensional setup (let, e.g., $X=\R^n$ and $Y \in \R^{m}$), if the matrix $B \in \R^{n \times n}$ is invertible, the minimization problem \eqref{eq:xhat} admits the following formulation:
\[
\hat x_B = \argmin_{x \in \R^n} \Big\{ \frac{1}{2}\| Ax - y \|^2_{\Se} + \| B^{-1} x \|_{1} \Big\},
\]
where $\| \cdot \|_{\Se} = \| \Se^{-1/2} \cdot \|$ is a norm on $\R^m$ that leverages the knowledge of the noise covariance $\Se$ to whiten the residual error.  It is worth observing that this argument, as well as all the results of this work, can be extended to complex Hilbert spaces $X$ and $Y$, provided that the scalar product in \eqref{eq:xhat} is replaced by its real part; to simplify the exposition we have decided to consider the real case only. Further details about this simplified formulation can be found in Section~\ref{sec:numerics}.
The main focus of this paper is related to the choice of the synthesis operator $B \colon \ell^2 \rightarrow X$ within \eqref{eq:xhat}. In particular, let us introduce the map 
\begin{equation}\label{eq:RB-def}
    R_B\colon Y \rightarrow X, \qquad R_B(y) = \hat{x}_B \quad \text{as in \eqref{eq:xhat}}.
\end{equation}
For a fixed $B$ satisfying the theoretical requirements expressed in Section \ref{sec:optimization}, the map $R_B$ is a \textit{stable} reconstruction operator, i.e., it provides a continuous approximation of the inverse map $A^{-1}$. In particular, $R_B$ promotes the sparsity of the regularized solution in terms of the synthesis operator $B$: namely, $R_B(y) = B \hat{u}_B$ is chosen so that it balances a good data fidelity ($A R_B(y) \approx y$) and only a few components of $\hat{u}_B$ are different from $0$. We observe that the regularization parameter is not explicit in the above formulation, since it is absorbed by $B$.

The choice of the synthesis operator $B$ encodes crucial information regarding the prior distribution of $x$. Consider, for simplicity, a signal processing problem (such as denoising, or deblurring), which, after a discretization of the spaces of signals, can be formulated as a linear inverse problem in $X=Y=\R^n$. Then, promoting the sparsity of the reconstructed signals under the choice $B=\operatorname{Id}$ encodes the prior information that the ground truths are expected to have few, isolated, spikes. Choosing $B$ to be a basis of (discretized) sines and cosines promotes band-limited signals in the Fourier domain. Setting $B$ to a wavelet transform, instead, might promote signals showing few jump discontinuities and smooth anywhere else. In general, the synthesis operator $B$ should be carefully chosen to incorporate, in the reconstruction operator $R_B$, any significant prior knowledge, or prior belief, on the ground truths.

In this paper, we follow a statistical learning approach for the selection of $B$. In particular, we assume that $x$ and $y$ are random objects with a joint probability distribution $\rho$. We then leverage the (partial) knowledge of such a probability distribution to define a data-driven rule for selecting $B$. To do so, we must introduce some assumptions on the probability distribution $\rho$, which are carefully detailed in Section \ref{sec:stat}. Before delving into the details, though, let us briefly sketch the overall idea of the learning-based choice of $B$, introducing some minimal requirements on the random objects $x,y$.
\begin{assumption} \label{ass:stat}
Let $(x,y) \sim \rho$, being $y = Ax+\varepsilon$, and where:
\begin{itemize}
    \item $x$ is a square-integrable random variable on $X$; 
    \item $\varepsilon$ is a zero-mean square-integrable random variable on $Y$, independent of $x$, with known and injective covariance $\Se\colon Y \rightarrow Y$.
\end{itemize}
\end{assumption}

We recall that assuming $\eps$ to be a zero-mean square-integrable random variable (i.e., $\E{[\norm{\eps}^2_Y]}<+\infty$) implies that its covariance operator $\Se$ is positive and trace-class (or nuclear, see \cite[Theorem 1.7]{bosq2000linear}), namely, that the sequence of  eigenvalues is summable. For this reason,
such an assumption is in principle quite restrictive when $Y$ is infinite-dimensional and, for example, the case of white noise requires careful treatment. In its most common formulation, white noise is modeled as a random process on a Hilbert space (for example, $Y=L^2(\Omega)$) with zero mean and covariance equal to the identity. Such a noise model clearly fails in satisfying the hypotheses whenever $Y$ is infinite dimensional, because the covariance $\Se = \operatorname{Id}$ is not trace-class, but only bounded. However, it is possible to represent such a process as a square-integrable random variable by carefully selecting the space $Y$, as described in Appendix~\ref{app:white_noise}.

For a specific choice of synthesis operator $B$, the quality of the reconstruction operator $R_B$ can be evaluated through the \textit{expected loss} $L\colon \mathcal{L}(\ell^2,X) \rightarrow \R$:
\begin{equation}
L(B) = \mathbb{E}_{(x,y)\sim \rho}[\|R_B(y)-x\|_X^2].
    \label{eq:loss}
\end{equation}
Let us now consider a suitable class $\mathcal{B} \subset \mathcal{L}(\ell^2,X)$ of \textit{admissible} operators, namely, that satisfy the assumptions reported in Section \ref{sec:optimization} guaranteeing the existence and uniqueness of the solution of \eqref{eq:xhat}. We define the optimal regularizer $R^\star = R_{B^\star}$, where $B^\star$ is a minimizer of the expected loss over the set of admissible operators $\mathcal B$, whose existence follows under suitable conditions on $\mathcal B$. In particular, putting together the expressions of \eqref{eq:loss} and \eqref{eq:xhat}, we obtain: 
\begin{equation} \label{eq:bilevel}
    \begin{gathered}
        B^\star \in \argmin_{B\in\mathcal B} 
        L(B),
        \\
        R_B(y) = B \argmin_{u \in \ell^2} \Big\{ \frac{1}{2}\| \Se^{-1/2} A\Bs u \|_Y^2 - \lK y, \Se^{-1}A\Bs u \rY + \| u \|_{\ell^1} \Big\},
    \end{gathered}    
\end{equation}
which is usually referred to as a \textit{bilevel} optimization problem. Note that the optimality of $B^\star$ strongly depends on the choice of the class $\mathcal B$. We moreover stress that the optimal target $\BOp$ can only be computed if the joint probability distribution $\rho$ of $x$ and $y$ is known. This is not verified in practical applications, but in many contexts, we may suppose to have access to a sample of $m$ pairs $\vzv= \{(x_j,y_j)\}_{j=1}^m$ such that the family is independent and identically distributed as $(x,y)$. 
In this case, following the paradigm of \textit{supervised learning}, we can approximate the expected loss by an empirical average, also known as \textit{empirical risk}, namely
\begin{equation}
    \label{eq:risk}
    \widehat{L}(B) = \frac{1}{m} \sum_{j=1}^m \| R_{B} (y_j) - x_j \|_{X}^2.
\end{equation}
A natural estimator of $B^\star$ is then given by any minimizer $\BS$ of the empirical loss:
\begin{equation}
  \BS \in \argmin_{B\in\mathcal B} \wh{L}(B).
    \label{eq:empiricaltarget}
\end{equation}
We stress that both $\wh{L}(B)$ and $\BS$ are random variables depending on the sample $\vzv$. We simply denote this dependence by $\wh{\cdot}$. The task of \textit{sample error estimates} is to quantify the dependence of the empirical target $\BS$ on the sample $\vzv$, in particular by bounding (either in probability or in expectation) the \textit{excess risk} $L(\BS) - L(B^\star)$ in terms of the sample size $m$.

\subsection*{Motivation}  

The theoretical framework developed in this paper allows significant flexibility in choosing the class $\mathcal{B}$ of admissible operators, enabling various practical applications. 
In this paper we provide two examples, detailed in Section \ref{sec:examples}, of our theory in an infinite-dimensional setting. In particular, we establish concrete guarantees on the sample complexity of the learning problem by deriving explicit bounds on the covering numbers of the respective operator classes.
In the first example, we assume that some prior knowledge about an effective synthesis operator $B_0$ is available (e.g., from physical understanding of the problem). In this setting, we construct a class $\mathcal{B}$ of compact perturbations around this reference operator. This approach maintains the desirable properties of $B_0$ while allowing for data-driven refinements. The second example is in the framework of signal processing applications where wavelets provide powerful sparse representations for many natural signals exhibiting multi-scale features. Our framework enables learning an optimal mother wavelet from data, rather than selecting from predefined wavelet families.

 The examples are complemented by the numerical studies of Section \ref{sec:numerics}, demonstrating how the abstract mathematical formulation directly connects to practical applications in finite-dimensional spaces.

\subsection*{Main contribution of this paper}  

In our analysis, we pursue the following main goals:
\begin{enumerate}
    \item studying the theoretical properties of the inner minimization problem \eqref{eq:xhat}, and in particular its well-posedness for fixed $B$ and the continuous dependence of the minimizer $\hat{x}_B$ in terms of $B \in \mathcal{B}$ (see Theorems \ref{thm:xhat_is_unique} and \ref{thm:stab} in Section \ref{sec:optimization});
    \item 
    studying the approximation properties of the empirical target $\BS$ (constructed leveraging the knowledge of $A$, $\Se$, and of the sample $\vzv$) with respect to the optimal target $B^\star$, deriving sample error estimates for the excess risk (see Theorem \ref{thm:CuckSmale} in Section \ref{sec:stat});
    \item formulate some relevant applications which satisfy the assumptions on $x$, $y$, $A$, and $\mathcal{B}$ introduced in Sections \ref{sec:optimization} and \ref{sec:stat} (see Section \ref{sec:examples});
    \item discuss numerical strategies to provide an approximation of the empirical target $\widehat{B}$ and showcase their effectiveness on some benchmark applications (see Section \ref{sec:numerics}).
\end{enumerate}

\subsection*{Comparison with existing literature}  

This work originated from the analysis carried out by a subset of the authors in \cite{alberti2021learning}, where we considered the problem of learning (also) the optimal operator $B$ in generalized Tikhonov regularization, i.e., when a quadratic penalty term is considered within the inner problem \eqref{eq:xhat} instead of the $\ell^1$ norm. The main difficulty associated with the proposed extensions resides in the lack of strong convexity and differentiability of the $\ell^1$ norm. Moreover, unlike in the Tikhonov case, the inner minimization problem \eqref{eq:xhat} does not possess a closed-form solution: unfortunately, this also prevents the explicit computation of the optimal regularizer $B^\star$, which was one of the main results in \cite{alberti2021learning}. As a final consequence, it is not possible to formulate here an \textit{unsupervised} strategy to learn $B^\star$, i.e., based only on a training set of ground truths $\{x_j\}_{j=1}^m$: indeed, the unsupervised strategy proposed in \cite{alberti2021learning} extensively leveraged the explicit expression of the optimal operator $B^\star$.

Other extensions of the work \cite{alberti2021learning} have been carried out in \cite{ratti2023learned,burger2023learned,chirinos2024learning,brauer2024learning,alberti2024learning} even though a statistical learning approach for sparse optimization in infinite dimension was not considered yet. See also \cite{hauptmann2024convergent} for a connection between generalized Tikhonov and linear plug-and-play denoisers. Let us just mention that bilevel approaches for inverse problems in imaging have been studied, from numerical and optimization points of view since many years \cite{calatroni2017bilevel,de2017bilevel}. In particular, the works \cite{horesh-haber-2009,huang-haber-horesh-2012,2022-ghosh-etal,ghosh-etal-2024} study the bilevel learning of $\ell^1$ regularizers, focusing on the finite-dimensional case and mostly on the algorithmic aspects, without considering the generalization issue from the theoretical point of view.  Moreover, our work is also related to statistical inverse learning for inverse problems \cite{blanchard2018optimal,abhishake2024statistical, bubba2023convex} and, also, more generally to the growing field of operator learning \cite{nelsen2021random, lanthaler2022error,kovachki2023neural,de2023convergence,boulle2023mathematical}. Let us also mention some of the main works on using machine learning techniques for solving inverse problems that had motivated this work \cite{adler2017solving,2017-kobler-etal,adler2018learned,lunz2018adversarial,arridge2019,li2020nett,mukherjee2020learned,mukherjee2021adversarially}. 

It is worth observing that the problem of learning an operator $B$ yielding a sparse representation of a dataset $\{x_j\}_{j=1}^m$ is deeply connected with the well-known task of Dictionary Learning. Although it is possible to provide a formulation of a dictionary learning problem very close to the bilevel problem \eqref{eq:bilevel}, the two problems pursue two distinct aims and may lead to different results, as shown in Appendix \ref{app:DL}. Despite this, the sample complexity bounds we obtain in this work are comparable to those derived for dictionary learning \cite{2015-gribonval-etal,2022-sulam-etal}.

\section{Theoretical results on the deterministic optimization problem \texorpdfstring{\eqref{eq:xhat}}{(1.2)}}
\label{sec:optimization}

The goal of this section is to study the well-posedness of the minimization problem formulated in \eqref{eq:xhat} for a fixed $B \in \mathcal{L}(\ell^2,X)$. Moreover, we study the stability of the minimization with respect to perturbations of $B$. To do so, we need to introduce a suitable set of hypotheses.

For the rest of the paper, we use the convention $J_B(u) = F_B(u) + \Phi(u)$, where
\begin{equation} \label{eq:FB}
    F_B(u) = \frac{1}{2}\| \Se^{-1/2} A\Bs u \|_Y^2 - \lK y, \Se^{-1} A\Bs u \rY
    \quad \text{and} \quad \Phi(u) = \| u \|_{\ell^1}.
\end{equation}

\begin{assumption}[Compatibility assumption between $\Se$ and $A$]\label{ass:compatibility}
Let the covariance matrix $\Se$ satisfy Assumption \ref{ass:stat}, and assume $\operatorname{Im}(A) \subset\operatorname{Im}(\Se)$. Moreover, we assume that
    \begin{equation}
    \Se^{-1} A \colon X \rightarrow Y \text{ is a compact operator}.
    \label{eq:compactibility}
    \end{equation}
\end{assumption}

Assumption \ref{ass:compatibility} is a technical requirement that is needed to guarantee that the functional $J_B$ is well-defined for a fixed $B \in \mathcal{L}(\ell^2,X)$.
Indeed, the first term in \eqref{eq:FB} can be rewritten as $\displaystyle \frac{1}{2} \lK \Se^{-1}A\Bs u, A\Bs u \rY$, and $\Se^{-1}A\Bs u$ exists and is unique since $\operatorname{Im}(A\Bs) \subset \operatorname{Im}\Se = \operatorname{dom}(\Se^{-1})$, and analogously for the second term. Finally, since the minimization problem \eqref{eq:xhat} is set in the Hilbert space $\ell^2 \supset \ell^1$, the third term in \eqref{eq:xhat} should be interpreted as $\| u\|_{\ell^1}$ if $u \in \ell^1$ and $+\infty$ if $u \in \ell^2 \setminus \ell^1$. We moreover remark that Assumption~\ref{ass:compatibility} is always satisfied in finite dimension when the covariance $\Se$ is invertible.

Since $J_B$ is non-differentiable and not strictly convex, further assumptions on the interplay between $A$ and $B$ are needed to guarantee the well-posedness of the minimization task. Our analysis is confined to the case where
$AB$ is assumed to be finite basis injective.

\begin{assumption}[Finite Basis Injectivity (FBI) of $AB$]\label{ass:FBI} 
For all $I\subset \N$ with $\operatorname{card}(I) < \infty$, the operator 
$(AB)|_{\ell^2_I }$ is injective, where we denoted $\N = \{1,2,\dots\}$ and $\ell^2_I := \operatorname{span}\{ e_i: i \in I\}$. 
\end{assumption}
Note that while this assumption is satisfied when $B$ is injective (provided $A$ is injective), it also holds for more general structures (e.g.\ FBI frames, see \cite{bredies2008linear}). We remark that, in order to guarantee the well-posedness of \eqref{eq:xhat}, there exist some alternatives to Assumption \ref{ass:FBI}, such as the Restricted Isometry Property (see \cite{candes2005decoding}), the source conditions (see \cite{grasmair2011necessary}), or the strict sparsity pattern (see \cite{bredies2008linear}). Compared to these alternatives, Assumption \ref{ass:FBI} allows us to consider more general operators $A$, to provide a clearer interpretation, and to avoid introducing requirements on the unknown ground truth, respectively.

The well-posedness of the minimization problem \eqref{eq:xhat} is now characterized by the following theorem.

\begin{thm}
\label{thm:xhat_is_unique}
Let $A \in \mathcal{L}(X,Y), \Se \in \mathcal{L}(Y,Y)$, and $B \in \mathcal{L}(\ell^2,X)$ satisfy Assumptions \ref{ass:compatibility} and \ref{ass:FBI}, and let $y \in Y$.
There exists a unique minimizer $\hat u_B = \hat u_B(y)$ of $J_B$, and, consequently, a unique  $\hat x_B = B \hat u_B$ in \eqref{eq:xhat}.
\end{thm}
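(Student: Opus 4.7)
My plan is to apply the direct method in the calculus of variations for existence, and to combine the first-order optimality condition with Assumption \ref{ass:FBI} for uniqueness. First I rewrite
\[
J_B(u) \;=\; \tfrac12 \|Ku\|_Y^2 \;-\; \langle w, u\rangle_{\ell^2} \;+\; \|u\|_{\ell^1},
\]
where $K := \Sigma_\varepsilon^{-1/2}AB : \ell^2 \to Y$ and $w := (\Sigma_\varepsilon^{-1}AB)^\ast y \in \ell^2$. Assumption \ref{ass:compatibility} makes $K$ well-defined, and compact (writing $K = \Sigma_\varepsilon^{1/2}(\Sigma_\varepsilon^{-1}AB)$, the factor $\Sigma_\varepsilon^{-1}AB$ is compact by \eqref{eq:compactibility} and $\Sigma_\varepsilon^{1/2}$ is bounded). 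The first two terms are continuous in $u$, while $\Phi = \|\cdot\|_{\ell^1}$, extended by $+\infty$ outside $\ell^1$, is convex and (strongly, hence weakly) lower semicontinuous on $\ell^2$. Thus $J_B$ is convex, weakly lsc, and proper ($J_B(0) = 0$).

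\textbf{Existence via coercivity.} I argue that every sublevel set of $J_B$ is norm-bounded in $\ell^2$. Suppose by contradiction that $t_n := \|u_n\|_{\ell^2} \to \infty$ with $J_B(u_n) \le c$. Setting $v_n := u_n/t_n$ and dividing the bound $J_B(u_n)\le c$ by $t_n$ gives
\[
\frac{t_n}{2}\|K v_n\|_Y^{\,2} + \|v_n\|_{\ell^1} \;\le\; \langle w, v_n\rangle_{\ell^2} + \frac{c}{t_n} \;\le\; \|w\|_{\ell^2} + \frac{c}{t_n},
\]
so in particular $\|Kv_n\|_Y\to 0$. Extracting a subsequence with $v_{n_k}\rightharpoonup v^\star$ in the unit ball of $\ell^2$, compactness of $K$ yields $Kv_{n_k}\to Kv^\star$ strongly, so $Kv^\star=0$. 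Since $\Sigma_\varepsilon^{-1/2}$ is injective, this forces $ABv^\star = 0$, hence $\langle w, v^\star\rangle = \langle y, \Sigma_\varepsilon^{-1}ABv^\star\rangle = 0$. Passing to the limit in the displayed inequality,
$\limsup_k \|v_{n_k}\|_{\ell^1} \le \langle w,v^\star\rangle = 0$, so $\|v_{n_k}\|_{\ell^1}\to 0$. But $\|v_{n_k}\|_{\ell^2}\le\|v_{n_k}\|_{\ell^1}$ contradicts $\|v_{n_k}\|_{\ell^2}=1$. Sublevels are therefore bounded in $\ell^2$, weakly compact by reflexivity, and weak lsc yields a minimizer.

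\textbf{Uniqueness.} Every minimizer $\hat u$ satisfies the optimality condition $w - (AB)^\ast\Sigma_\varepsilon^{-1}(AB)\hat u \in \partial\Phi(\hat u)$, and the left-hand side lies in $\ell^2$. Any $v \in \partial\Phi(\hat u)$ has $v_i = \operatorname{sign}(\hat u_i)$ on $\operatorname{supp}(\hat u)$, and for $v \in \ell^2$ this forces $\operatorname{supp}(\hat u)$ to be finite. Given two minimizers $\hat u^\star$ and $\hat u^{\star\star}$, their supports lie in a common finite $I \subset \N$, so both minimize $J_B$ restricted to the finite-dimensional subspace $\ell^2_I$. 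By Assumption \ref{ass:FBI}, $(AB)|_{\ell^2_I}$ is injective, so $u\mapsto \tfrac12\|Ku\|_Y^2$ is strictly convex on $\ell^2_I$, and hence so is $J_B|_{\ell^2_I}$; this gives $\hat u^\star = \hat u^{\star\star}$, and $\hat x_B = B\hat u_B$ follows.

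\textbf{Expected main obstacle.} The delicate part is coercivity: $F_B$ is generally unbounded below, since approximate kernel directions of $K$ are skewed by the linear term in $w$ and no source condition on $y$ is assumed. The rescaling $v_n = u_n/t_n$ exploits three ingredients simultaneously — compactness of $K$ (to conclude $Kv^\star=0$), the identity $\langle w,v^\star\rangle = 0$ on $\ker(AB)$ (so the linear term vanishes in the limit), and the elementary $\ell^2 \le \ell^1$ inequality (so the vanishing of $\|v_{n_k}\|_{\ell^1}$ contradicts the normalization) — and it is the orchestration of these three points that makes the contradiction go through.
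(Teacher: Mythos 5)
Your proof is correct, and it takes a genuinely different --- and, for this fixed-$B$ statement, more elementary --- route than the paper's. For existence, the paper bounds the sublevel sets of $J_B$ in $\ell^1$ (Proposition~\ref{prop:sublevels}, built on the compactness-based lower bound of Lemma~\ref{lem:aux_lb}) and then extracts a weak-* convergent subsequence in $\ell^1$ via Banach--Alaoglu; you instead prove $\ell^2$-boundedness of the sublevel sets by a rescaling argument that uses compactness of $K=\Se^{-1/2}AB$ to force $Kv^\star=0$ along normalized directions and the adjoint identity to annihilate the linear term, and then run the direct method in the reflexive space $\ell^2$ (the weak lower semicontinuity of $\Phi$ on $\ell^2$, which you assert, follows from convexity plus a Fatou-type argument and is standard). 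Both existence proofs ultimately rest on the same hypothesis \eqref{eq:compactibility}. For uniqueness, the paper goes through the quadratic growth estimate of Proposition~\ref{prop:condition}, whose proof requires the uniform tail decay of Lemma~\ref{lem:PN1} and the uniform lower bound of Lemma~\ref{lem:PN2}; your observation that the subgradient element $-F_B'(\hat u)\in\ell^2$ must have unit-modulus entries on $\supp(\hat u)$, so that the support is finite, combined with strict convexity of $J_B$ on $\ell^2_I$ via Assumption~\ref{ass:FBI}, is a much shorter path to the same conclusion --- it is essentially the non-uniform shadow of Step~1 of Proposition~\ref{prop:condition}. What the paper's heavier machinery buys, and what your argument does not provide, is quantitative control uniform over the compact class $\mathcal B$ and over $\|y\|_Y\le Q$ (the constants $c_{\rm UB}$, $N_0$, $\tilde c_{\rm ST}$), which is indispensable for the stability Theorem~\ref{thm:stab} and the statistical results of Section~\ref{sec:stat}; for Theorem~\ref{thm:xhat_is_unique} alone, your proof is complete and self-contained.
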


The proof is presented in two parts: the existence is proved at the end of Section \ref{subsec:existence} while the uniqueness follows as a consequence of Theorem \ref{thm:stab} at the end of Section \ref{subsec:stability}.

Let us next consider the stability of the minimizer with respect to perturbing $B$ in a suitable class of operators $\mathcal{B}$. Towards that end, let us introduce the following assumption, which is also strongly motivated by the statistical learning results presented in Section \ref{sec:stat}, and has also been considered in \cite{alberti2021learning}.
\begin{assumption}[Requirements on $\mathcal{B}$] \label{ass:mathcalB}
Let $\mathcal{B} \subset \mathcal{L}(\ell^2,X)$ be a compact set of operators such that every $B \in \mathcal{B}$ satisfies Assumption \ref{ass:FBI}. 
\end{assumption}

A first consequence of the compactness of $\mathcal{B}$ is that the norms $\| B \|$, $B\in\mathcal{B}$, are uniformly bounded by $|{\mathcal B}| := \sup_{B\in{\mathcal B}} \norm{B} < \infty$.
Moreover, as we show in the following subsections, Assumption \ref{ass:mathcalB} allows us to provide a uniform expression of several properties of the solutions of problem \eqref{eq:xhat}, namely, independently of the choice of $B$. The most relevant consequence of this discussion is the following result, providing a global stability estimate of the minimizers $\hat{x}_B$ with respect to perturbations of $B$ within $\mathcal{B}$.

\begin{thm} \label{thm:stab}
Let $A,\Se$ and $\mathcal{B}$ satisfy Assumptions \ref{ass:stat}, \ref{ass:compatibility} and \ref{ass:mathcalB}, and let $\| y\|_Y \leq Q$. Then, there exists a constant $c_{\rm ST} = c_{\rm ST}(\mathcal{B},A,\Se,Q)$ such that for every $B_1,B_2 \in \mathcal{B}$, 
\begin{equation}
\begin{gathered}
\|R_{B_1}(y)-R_{B_2}(y) \|_{X} \leq c_{\rm ST} \| B_1 - B_2 \|,
\end{gathered}
    \label{eq:stab}
\end{equation}
\end{thm}

\subsection{Existence of the minimizer with fixed \texorpdfstring{$B \in {\mathcal B}$}{B in B}}
\label{subsec:existence}

To establish the existence of a minimizer $\hat{u}_B$ in \eqref{eq:xhat}, we first show that the sublevel sets of $J_B$ are bounded, which ensures the weak convergence of any minimizing sequence.

\begin{lemma}
\label{lem:aux_lb}
Let $A,\Se$ and $\mathcal{B}$ satisfy Assumptions \ref{ass:compatibility} and \ref{ass:mathcalB}. There exists a monotonically increasing function $h \colon \R_+ \to \R_+$ depending on $A$, $\Se$, and $|\mathcal{B}|$ such that the following holds: if  $\norm{w}_X \leq |{\mathcal B}|$ and $\norm{\Se^{-1}Aw}_Y \geq \gamma>0$, then $\norm{\Se^{-1/2}Aw}_Y \geq h(\gamma)>0$.
\end{lemma}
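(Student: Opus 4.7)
The plan is to reduce the claim to a continuity-compactness argument via the factorization $\Se^{-1/2} A = \Se^{1/2}(\Se^{-1} A)$, which is legitimate since $\operatorname{Im}(A) \subset \operatorname{Im}(\Se)$ by Assumption~\ref{ass:compatibility}. Writing $T := \Se^{-1} A$, the same assumption tells us that $T$ is compact, and the claim we want becomes: if $\|w\|_X \leq |\mathcal{B}|$ and $\|Tw\|_Y \geq \gamma$, then $\|\Se^{1/2} Tw\|_Y \geq h(\gamma) > 0$.

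For each $\gamma > 0$, I would introduce the set
\[
K_\gamma := \overline{T(\{w \in X : \|w\|_X \leq |\mathcal{B}|\})} \cap \{v \in Y : \|v\|_Y \geq \gamma\}.
\]
Because $T$ is compact, the closure on the left is a compact subset of $Y$; intersecting with the closed set $\{\|v\|_Y \geq \gamma\}$ yields a compact set $K_\gamma$. Every element of the form $Tw$ with $\|w\|_X \leq |\mathcal{B}|$ and $\|Tw\|_Y \geq \gamma$ lies in $K_\gamma$, so it suffices to bound $\|\Se^{1/2} v\|_Y$ from below uniformly in $v \in K_\gamma$.

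Now define $h(\gamma) := \inf_{v \in K_\gamma} \|\Se^{1/2} v\|_Y$. The map $v \mapsto \|\Se^{1/2} v\|_Y$ is continuous on $Y$ since $\Se^{1/2}$ is bounded, and $K_\gamma$ is compact, so the infimum is attained at some $v_0 \in K_\gamma$. If $h(\gamma) = 0$, then $\Se^{1/2} v_0 = 0$; since $\Se$ is injective (Assumption~\ref{ass:stat}), so is $\Se^{1/2}$, which forces $v_0 = 0$ and contradicts $\|v_0\|_Y \geq \gamma > 0$. Thus $h(\gamma) > 0$. Monotonicity follows from the nesting $K_{\gamma_2} \subseteq K_{\gamma_1}$ for $\gamma_1 \leq \gamma_2$, which yields $h(\gamma_2) \geq h(\gamma_1)$.

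The main obstacle is recognizing that the compactness of $\Se^{-1} A$ is what makes the statement true at all: without it, the claim would fail, since $\Se^{1/2}$ is not bounded below on all of $Y$ (the eigendirections of $\Se$ with small eigenvalues produce vectors $v$ with $\|\Se^{-1} v\|_Y$ arbitrarily large but $\|\Se^{1/2} v\|_Y$ arbitrarily small). Compactness of $T$ is precisely what constrains the vectors of interest to a compact set on which the continuous, injective map $\Se^{1/2}$ must be bounded away from zero. The remaining work is routine: verifying compactness of $K_\gamma$ and continuity of $v \mapsto \|\Se^{1/2} v\|_Y$ are both standard, and the dependence of $h$ on $A$, $\Se$, and $|\mathcal{B}|$ enters only through the construction of $K_\gamma$.
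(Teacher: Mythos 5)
Your proof is correct, and it takes a genuinely different (and in some respects cleaner) route than the paper's. The paper argues by contradiction: it takes a sequence $w_n$ with $\norm{w_n}_X\leq|\mathcal{B}|$, $\norm{\Se^{-1}Aw_n}_Y\geq\gamma$ and $\norm{\Se^{-1/2}Aw_n}_Y<\frac1n$, extracts a weakly convergent subsequence in $X$, uses weak lower semicontinuity of the norm to conclude that the weak limit lies in $\ker(A)$, and then invokes compactness of $\Se^{-1}A$ to upgrade $\Se^{-1}Aw_{n_k}\rightharpoonup 0$ to strong convergence, contradicting the lower bound $\gamma$; monotonicity of $h$ is then handled by a separate (and somewhat awkward) modification argument. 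You instead push the compactness into $Y$: the image of the ball under the compact operator $T=\Se^{-1}A$ has compact closure, so $h(\gamma)$ can be \emph{defined} as the infimum of the continuous function $v\mapsto\norm{\Se^{1/2}v}_Y$ over the compact set $K_\gamma$, positivity follows from injectivity of $\Se^{1/2}$ (inherited from injectivity of $\Se$), and monotonicity is automatic from the nesting $K_{\gamma_2}\subseteq K_{\gamma_1}$. Both proofs rest on the same two pillars --- compactness of $\Se^{-1}A$ and injectivity of $\Se$ --- and your closing remark correctly identifies compactness as the essential hypothesis. Two small points you should make explicit: the factorization $\Se^{-1/2}Aw=\Se^{1/2}(\Se^{-1}Aw)$ deserves a one-line justification (valid because $Aw\in\Im(\Se)$, so $\Se^{-1/2}\Se z=\Se^{1/2}z$ applies), and $K_\gamma$ may be empty for large $\gamma$, in which case the implication is vacuous but you should assign $h(\gamma)$ some finite positive value (e.g.\ cap the infimum) so that $h$ maps into $\R_+$ and stays monotone. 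Neither affects the validity of the argument.
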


\begin{proof}
Let us prove existence of a general $h$ by contradiction: assume that for all $n$ there exists $w_n \in X$ satisfying $\norm{w_n}_X \leq |{\mathcal B}|$, $\norm{\Se^{-1}Aw_n}_Y \geq \gamma$ and $\norm{\Se^{-1/2}Aw_n}_Y < \frac 1n$. Since $\{w_n\}_{n=1}^\infty \subset X$ is bounded, we can find a subsequence $\{w_{n_k}\}_{k=1}^\infty$ that weakly converges to some $w\in X$. 
Due to the weak lower semicontinuity of the norm (see, e.g., \cite[Section 1.4 and Corollary 3.9]{brezis2011functional}), we have that $\| \Se^{-1/2}A w\|_Y \leq \liminf_{k} \| \Se^{-1/2}Aw_{n_k}\|_Y = 0$, and also, by the injectivity of $\Se^{-1/2}$, that $w\in \operatorname{ker}(A)$. Therefore, by the compactness of $\Se^{-1}A$ we have that $\| \Se^{-1}A w_{n_k}\|_{Y} \rightarrow 0$.
This contradicts our assumption and, therefore, a lower bound $h(\gamma) >0$ must exist.

The existence of a monotonically increasing $h$ can also be demonstrated by contradiction: suppose that there exist $\gamma_1<\gamma_2$ such that for any $h$ satisfying the statement, we have $h(\gamma_1)>h(\gamma_2)$. Then we immediately see that $\tilde h$ defined by
\begin{equation*}
    \tilde h(\gamma) = 
    \begin{cases}
        h(\gamma)& \gamma\neq \gamma_1 \\
        h(\gamma_2)& \gamma = \gamma_1,
    \end{cases}
\end{equation*}
also satisfies the claim yielding the contradiction.
\end{proof}

\begin{proposition} \label{prop:sublevels}
Let $M \in \R$, and suppose Assumptions \ref{ass:stat}, \ref{ass:compatibility}, \ref{ass:FBI} and \ref{ass:mathcalB} hold. Let $\| y \|_Y \leq Q$. Then, there exists a constant $c_{\rm UB}(M) = c_{\rm UB}(M; A,\Se, Q,|{\mathcal B}|)$ such that, for every $B \in \mathcal{B}$, we have
\begin{equation} \label{eq:sublevel}
     \{u \in \ell^2 \; | \; J_B(u) \leq M\} \subset  \{u \in \ell^2 \; | \; \| u \|_{\ell^1} \leq c_{\rm UB}(M)\}.
\end{equation}
\end{proposition}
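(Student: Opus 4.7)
The plan is to argue by contradiction via a rescaling argument combined with the compactness of $\mathcal{B}$. Suppose the claim fails: then there exist sequences $B_n \in \mathcal{B}$ and $u_n \in \ell^1$ with $J_{B_n}(u_n) \leq M$ but $t_n := \|u_n\|_{\ell^1} \to \infty$. Setting $v_n := u_n / t_n$ yields $\|v_n\|_{\ell^1} = 1$ and hence $\|v_n\|_{\ell^2} \leq 1$. It is convenient to introduce $T_B := \Se^{-1/2} A B \colon \ell^2 \to Y$, which is well-defined and compact (because $\Se^{-1}A$ is compact by Assumption \ref{ass:compatibility} and $\Se^{1/2}$ is bounded, so $T_B = \Se^{1/2}(\Se^{-1}A) B$), and the Riesz representative $\ell_B := (\Se^{-1} A B)^* y \in \ell^2$ of the functional $u \mapsto \langle y, \Se^{-1} A B u\rangle_Y$. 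With this notation, $J_B(u) = \tfrac{1}{2}\|T_B u\|_Y^2 - \langle \ell_B, u\rangle_{\ell^2} + \|u\|_{\ell^1}$, and $\|\ell_B\|_{\ell^2} \leq \|\Se^{-1}A\|\, |\mathcal{B}|\, Q$ uniformly in $B \in \mathcal{B}$.

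Dividing the bound $J_{B_n}(u_n) \leq M$ by $t_n$ yields
\[
\frac{t_n}{2}\|T_{B_n} v_n\|_Y^2 + 1 \leq \frac{M}{t_n} + \langle \ell_{B_n}, v_n\rangle_{\ell^2}.
\]
Since $\|v_n\|_{\ell^2} \leq 1$ and $\|\ell_{B_n}\|_{\ell^2}$ is uniformly bounded, the right-hand side stays bounded as $n \to \infty$; this forces both $\|T_{B_n} v_n\|_Y \to 0$ and $\liminf_n \langle \ell_{B_n}, v_n\rangle_{\ell^2} \geq 1$.

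I would then extract joint limits. By Assumption \ref{ass:mathcalB}, a subsequence satisfies $B_n \to B^\dagger$ in operator norm for some $B^\dagger \in \mathcal{B}$, and by Banach--Alaoglu a further subsequence gives $v_n \rightharpoonup v$ weakly in $\ell^2$. The maps $B \mapsto T_B$ and $B \mapsto \ell_B$ are norm-Lipschitz in $B$ (with constants controlled by $\Se^{-1}A$ and $Q$), so the decomposition
\[
T_{B_n} v_n = (T_{B_n} - T_{B^\dagger}) v_n + T_{B^\dagger}(v_n - v) + T_{B^\dagger} v,
\]
together with the compactness of $T_{B^\dagger}$, implies $T_{B_n} v_n \to T_{B^\dagger} v$ strongly in $Y$. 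Comparing with $T_{B_n} v_n \to 0$ yields $T_{B^\dagger} v = 0$, and then $A B^\dagger v = 0$ since $\Se^{-1/2}$ is injective (as $\Se$ is). An analogous splitting argument produces $\ell_{B_n} \to \ell_{B^\dagger}$ strongly in $\ell^2$, and combined with $v_n \rightharpoonup v$ this gives $\langle \ell_{B_n}, v_n\rangle_{\ell^2} \to \langle \ell_{B^\dagger}, v\rangle_{\ell^2}$.

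The contradiction is now immediate: using $A B^\dagger v = 0$,
\[
\langle \ell_{B^\dagger}, v\rangle_{\ell^2} = \langle y, \Se^{-1} A B^\dagger v\rangle_Y = 0,
\]
which is incompatible with $\liminf_n \langle \ell_{B_n}, v_n\rangle_{\ell^2} \geq 1$. I expect the main technical hurdle to be precisely this joint passage to the limit, since one must interleave operator-norm convergence of $B_n \to B^\dagger$, the compactness of $\Se^{-1} A$ (to turn a weakly convergent sequence into a strongly convergent image), and the weak convergence $v_n \rightharpoonup v$ in $\ell^2$. Because the argument is by contradiction, the constant $c_{\rm UB}(M)$ is produced only implicitly, with the non-explicit dependence on $M$, $A$, $\Se$, $Q$, and $|\mathcal{B}|$ as claimed.
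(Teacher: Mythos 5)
Your argument is sound in its main mechanism but follows a genuinely different route from the paper's. The paper rescales by $\tau=\|u\|_{\ell^1}$, derives the dichotomy ``either $\tau\le 2M$ or $\|\Se^{-1}ABu\|_Y/\tau \ge 1/(2\|y\|_Y)$'', invokes a separate compactness lemma (Lemma~\ref{lem:aux_lb}, itself proved by contradiction) to convert the latter into a lower bound $\|\Se^{-1/2}Aw\|_Y\ge h$, and then solves a quadratic inequality to obtain the quasi-explicit bound \eqref{eq:sublevel_tau_bound_Q}. That proof works with the rescaled vector $w=Bu/\|u\|_{\ell^1}$ in a ball of $X$ and therefore uses only the uniform norm bound $|\mathcal{B}|$ --- which is why the constant is written as $c_{\rm UB}(M;A,\Se,Q,|\mathcal{B}|)$ --- and it exhibits the affine-in-$M$ growth of $c_{\rm UB}(M)^2$. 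Your proof runs a single global contradiction in the coefficient space $\ell^2$, which forces you to extract a norm-convergent subsequence $B_n\to B^\dagger$ and hence to use the full compactness of $\mathcal{B}$ (so your constant depends on the set $\mathcal{B}$, not merely on $|\mathcal{B}|$), and it is entirely non-constructive. The individual limit passages you describe ($T_{B_n}v_n\to T_{B^\dagger}v$ via compactness of $\Se^{-1/2}A$ and weak convergence of $v_n$; $\ell_{B_n}\to\ell_{B^\dagger}$ in norm; the final identity $\langle \ell_{B^\dagger},v\rangle=\langle y,\Se^{-1}AB^\dagger v\rangle=0$) are all correct, and neither structural difference affects how the proposition is used later.

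There is, however, one genuine gap as written: the proposition asserts a constant depending on $y$ only through the radius $Q$, and this uniformity over $y\in\mathbb{B}_Y(Q)$ is actually needed downstream (e.g.\ in Lemma~\ref{lem:PN1} and in Theorem~\ref{thm:CuckSmale}, where \eqref{eq:boundmin} is applied for $\rho$-a.e.\ $y$ with a single constant). Your contradiction hypothesis fixes $y$, so it only yields a constant $c(M,y)$ with uncontrolled dependence on $y$. The fix stays entirely within your framework: negate the uniform statement, so that you also obtain a sequence $y_n\in\mathbb{B}_Y(Q)$; pass to a weakly convergent subsequence $y_n\rightharpoonup y^\dagger$; and use the compactness of $(\Se^{-1}A)^*$ (Schauder's theorem) together with $B_n^*\to (B^\dagger)^*$ in operator norm to conclude that $\ell_{B_n,y_n}=B_n^*(\Se^{-1}A)^*y_n\to (B^\dagger)^*(\Se^{-1}A)^*y^\dagger$ strongly in $\ell^2$. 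Since the quadratic term $T_{B_n}v_n$ does not involve $y$, the rest of your argument is unchanged, and the contradiction between $\langle\ell_{B_n,y_n},v_n\rangle\to \langle y^\dagger,\Se^{-1}AB^\dagger v\rangle=0$ and $\liminf_n\langle\ell_{B_n,y_n},v_n\rangle\ge 1$ survives.
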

\begin{proof}

The condition $J_B(u)\leq M$ reads as
\begin{equation*}
\frac{1}{2} \| \Se^{-1/2} A B u \|_Y^2 + \norm{u}_{\ell^1} \leq M + \langle y,\Se^{-1}A Bu \rY.
\end{equation*}
Let us next make a change of variables with $w = Bu/\norm{u}_{\ell^1}$ and $\tau = \norm{u}_{\ell^1}$, which leads to
\begin{equation}
\label{eq:aux_prop_bound2}
\frac{1}{2} \| \Se^{-1/2} A w \|_Y^2\tau ^2 + \tau \leq M + \tau \norm{y}_Y \norm{\Se^{-1}A w}_Y.
\end{equation}
In particular, we have
\begin{equation*}
1  \leq \frac M \tau + \norm{y}_Y \norm{\Se^{-1}A w}_Y.
\end{equation*}
Hence, either $\norm{u}_{\ell^1} = \tau \leq \max\{2M,0\}$ or for any $\tau \geq \max\{2M,0\}$ it holds that
\begin{equation*}
    \norm{\Se^{-1}A w}_Y \geq \frac{1}{2 \norm{y}_Y}.
\end{equation*}
Since $\norm{Bu}_X \leq |\mathcal{B}|\norm{u}_{\ell^2}$ for all $u \in \ell^2$ and $B \in \mathcal{B}$, we have
\begin{equation*}
\| u \|_{\ell^1} \geq \| u \|_{\ell^2} \geq \frac{1}{|{\mathcal B|}} \| Bu \|_X \qquad \forall u \in \ell^2, \forall B \in \mathcal{B},
\end{equation*}
which also implies that $\norm{w}_X \leq |{\mathcal B}|$.
Next, by Lemma \ref{lem:aux_lb} it follows that
\begin{equation*}
    \norm{\Se^{-1/2} A w}_Y \geq h,
\end{equation*}
where we abbreviate $h= h\left(\frac{1}{2 \norm{y}_Y}\right)$ for convenience.
Modifying \eqref{eq:aux_prop_bound2} we obtain
\begin{equation*}
    \frac{1}{2} h^2\tau^2  \leq M + \tau \norm{y}_Y \norm{\Se^{-1}A} |\mathcal{B}| \leq M + \frac 14 h^2\tau^2 + \frac{\norm{y}_Y^2 \norm{\Se^{-1}A}^2 |\mathcal{B}|^2}{h^2}
\end{equation*}
and solving for $\tau$ yields
\begin{equation}
    \label{eq:sublevel_tau_bound}
    \tau^2 \leq \frac{4M}{h^2} + \frac{4\norm{y}_Y^2 \norm{\Se^{-1}A}^2 |\mathcal{B}|^2}{h^4}.
\end{equation}
Note that, by the monotonicity of $h$, from $\| y\|_Y \leq Q$ it follows that  $h\geq h\left(\frac{1}{2Q}\right)$, hence 
\begin{equation}
    \label{eq:sublevel_tau_bound_Q}
    \tau^2 \leq \frac{4M}{h\left(\frac{1}{2Q}\right)^2} + \frac{4Q^2 \norm{\Se^{-1}A}^2 |\mathcal{B}|^2}{h\left(\frac{1}{2Q}\right)^4}.
\end{equation}
Now the desired bound for $\tau = \norm{u}_{\ell^1}$ follows for any $u \in \ell^2$, $B \in \mathcal{B}$.
\end{proof}

\begin{proof}[Proof of Theorem~\ref{thm:xhat_is_unique} (existence)]
The existence of a minimizer $\hat{u}_B$ in \eqref{eq:xhat} is now guaranteed by standard arguments: any minimizing sequence $\{u_j\}_{j=1}^\infty \subset \ell^2$ belongs to some sublevel set and, therefore, to some $\ell^1$-ball according to Proposition \ref{prop:sublevels}. By the Banach--Alaoglu theorem, the sequence has a weak-* converging subsequence in $\ell^1$, namely, there exists $\{u_{j_l}\}_{l=1}^\infty \subset \ell^1$ such that $u_{j_l}\xrightharpoonup{\ast} \bar{u}$. Since $F_B$ is continuous and the $\ell^1$-norm is lower semicontinuous with respect to the weak-* topology (i.e., $u_{j_l}\xrightharpoonup{\ast} \bar{u}$ implies $ \| \bar{u} \|_{\ell^1} \leq\liminf_{l \rightarrow \infty} \| u_{j_l}\|_{\ell^1}$), one can show that the limit is a minimizer of $J_B$.
\end{proof}

Let us note that since $J_B(\hat{u}_B) \leq J_B(0) = 0$, denoting by $c_{\rm UB}$ the constant $c_{\rm UB}(0)$ in \eqref{eq:sublevel} we have the following bound for such minimizers, uniformly in $B \in \mathcal{B}$.
\begin{equation}
\|\hat{u}_B\|_{\ell^2} \leq \|\hat{u}_B\|_{\ell^1} \leq c_{\rm UB}.
\label{eq:boundmin}    
\end{equation}
The uniqueness of such a minimizer is a consequence of Proposition \ref{prop:condition}, as we will show later.

\subsection{Stability in \texorpdfstring{${\mathcal B}$}{B} and uniqueness of the minimizer}
\label{subsec:stability}

The key result of this section is Proposition \ref{prop:condition}, as it directly enables the uniqueness of the minimizer in Theorem \ref{thm:xhat_is_unique}, and its stability with respect to $B$. Such a result is in the spirit of \cite[Theorem 2]{bredies2008linear}, and its proof is partially based on the one of \cite[Lemma 3]{bredies2008linear}. The main contribution of Proposition \ref{prop:condition} is that the constant appearing in \eqref{eq:condition} does not depend on the choice of $B$, provided that we consider an operator class ${\mathcal B}$ satisfying Assumption \ref{ass:mathcalB}.
The proof of Proposition \ref{prop:condition} requires some preliminary lemmas.

Let us first note that $F_B \colon \ell^2 \rightarrow \R$ is convex and differentiable with gradient
\[
F_B'(u) = B^*\left(\Se^{-1}A\right)^* (AB u - y),
\]
where we have used that $A^* (\Se^{-1} A) = \left(\Se^{-1}A\right)^* A$, since 
\(\langle \Se^{-1} Av, Aw \rY = \langle  Av, \Se^{-1}Aw \rY = \langle  (\Se^{-1}A)^*Av, w \rX,\) 
for all $v,w \in X$ as $\Se^{-1}$ is self-adjoint 
and $\Im(A) \subset \operatorname{dom}({\Se^{-1}})$.
Notice that $F_B'$ is affine in $u$ and Lipschitz continuous, and the Lipschitz constant is uniformly bounded by $L_{F'} = |{\mathcal B}|^2 \| \Se^{-1/2} A \|^2$ for $B \in {\mathcal B}$. 
Since both $F_B$ and $\Phi$ are convex, $\hat{u}_B$ being a minimizer of $J_B$ is equivalent to $0$ belonging to the subgradient set of $J_B$ at $\hat{u}_B$, i.e. $0\in \partial J_B(\hat{u}_B)$.
Now, due to the differentiability of $F_B$, an equivalent optimality condition is given by $\hat{w}_B \in \partial \Phi(\hat{u}_B)$, where
\begin{equation}
\hat{w}_B = -F'_B(\hat{u}_B) = B^*\left(\Se^{-1}A\right)^* (y- AB \hat{u}_B).
    \label{eq:OC}
\end{equation}
Moreover, thanks to the explicit expression of the subdifferential of the $\ell^1$-norm, we can specify that $\hat{w}_B \in \partial \Phi(\hat{u}_B)$ is equivalent to
\begin{equation}
 \hat{w}_{B,k} = \left\{ \begin{aligned}
\sign(\hat{u}_{B,k}) \quad &\text{if } \hat{u}_{B,k} \neq 0 \\
\xi \in [-1,1]  \quad &\text{if } \hat{u}_{B,k} = 0. \\
\end{aligned}\right.
\label{eq:ell1subdiff}    
\end{equation}

The following lemmas explore properties that  are valid uniformly in the compact operator class $\mathcal{B}$. For $N > 0$, we introduce the orthogonal projection operator onto $\ell^2$ by setting
\begin{equation*}
    P_N\colon \ell^2 \rightarrow \ell^2: \ P_N u = (u_1, \ldots, u_N, 0, 0, \ldots), \qquad P_N^\perp = \operatorname{Id} - P_N.
\end{equation*}

\begin{lemma}\label{lem:PN1}
Suppose that Assumptions \ref{ass:compatibility}, \ref{ass:FBI}, and \ref{ass:mathcalB} are satisfied. Let $\mathbb{B}_{Y}(Q) = \{y \in Y: \| y\|_Y \leq Q\}$. Then the elements $\hat{w}_B \in \ell^2$ corresponding to the minimizers $\hat{u}_B \in \ell^2$ of $J_B$ via \eqref{eq:OC} satisfy
\begin{equation*}
\lim_{N \rightarrow \infty} \sup\big\{ 
\| P_N^\perp \hat{w}_B \|_{\ell^2}: \; B \in \mathcal{B},\ y \in \mathbb{B}_{Y}(Q) \big\} = 0.
\end{equation*}
\end{lemma}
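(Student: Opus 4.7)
The plan is to show that the set $\mathcal{W} := \{\hat{w}_B : B \in \mathcal{B},\ y \in \mathbb{B}_Y(Q)\}$ is \emph{relatively compact} in $\ell^2$, and then to derive the uniform tail bound from total boundedness. The latter is a standard $\eta/2$-net argument: given $\eta>0$, cover $\overline{\mathcal{W}}$ by finitely many $\ell^2$-balls of radius $\eta/2$ around points $w_1,\ldots,w_K$, and pick $N$ so large that $\|P_N^\perp w_i\|_{\ell^2} < \eta/2$ for each $i$ (possible since $\|P_N^\perp w\|_{\ell^2}\to 0$ for every fixed $w\in\ell^2$). Then for any $\hat{w}_B \in \mathcal{W}$ and $w_i$ within $\eta/2$ of it, $\|P_N^\perp \hat{w}_B\|_{\ell^2} \leq \|P_N^\perp\|\,\|\hat{w}_B - w_i\|_{\ell^2} + \|P_N^\perp w_i\|_{\ell^2} < \eta$.

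To establish the compactness of $\mathcal{W}$, I would proceed in three stages. First, the residuals $y - AB\hat{u}_B$ lie in a bounded subset of $Y$: by the uniform estimate $\|\hat{u}_B\|_{\ell^1}\leq c_{\rm UB}$ from \eqref{eq:boundmin} together with $\|B\|\leq|\mathcal{B}|$, we have $\|AB\hat{u}_B\|_Y \leq \|A\|\,|\mathcal{B}|\,c_{\rm UB}$, so $\|y - AB\hat{u}_B\|_Y \leq Q + \|A\|\,|\mathcal{B}|\,c_{\rm UB}$. Second, by Assumption~\ref{ass:compatibility} the operator $\Se^{-1}A$ is compact, hence so is $(\Se^{-1}A)^*$, and therefore the set
$$\mathcal{C} := \bigl\{(\Se^{-1}A)^*(y - AB\hat{u}_B) : B\in\mathcal{B},\ y\in\mathbb{B}_Y(Q)\bigr\}$$
is precompact in $X$ as the image of a bounded set under a compact operator.

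Third, since $\hat{w}_B = B^* v_B$ with $v_B\in\mathcal{C}$, I argue sequentially. Any sequence in $\mathcal{W}$ has the form $B_n^* v_n$ with $B_n\in\mathcal{B}$ and $v_n\in\mathcal{C}$; by compactness of $\mathcal{B}$ in operator norm (Assumption~\ref{ass:mathcalB}) and precompactness of $\mathcal{C}$, pass to a subsequence along which $B_{n_k}\to B$ in $\mathcal{L}(\ell^2,X)$ and $v_{n_k}\to v$ in $X$. Since taking adjoints is an isometry, $B_{n_k}^*\to B^*$ in $\mathcal{L}(X,\ell^2)$, and
$$\|B_{n_k}^* v_{n_k} - B^* v\|_{\ell^2} \leq \|B_{n_k}^* - B^*\|\,\|v_{n_k}\|_X + \|B\|\,\|v_{n_k} - v\|_X \longrightarrow 0,$$
so $\mathcal{W}$ is relatively compact in $\ell^2$, as required.

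The main obstacle is identifying and combining the right compactness ingredients: the compact operator $(\Se^{-1}A)^*$ is exactly what compensates for the fact that $\{y - AB\hat{u}_B\}$ is only bounded (not precompact) in $Y$, while the operator-norm compactness of $\mathcal{B}$ is what lets the adjoint map $B\mapsto B^*$ transport the precompact subset $\mathcal{C}\subset X$ into a precompact subset of $\ell^2$. A subtle point worth flagging is that, although $\hat{u}_B$ is not claimed to depend continuously on $y$ (only on $B$, by Theorem~\ref{thm:stab}), the argument above never needs such continuity—the uniform $\ell^1$-bound \eqref{eq:boundmin} suffices, because the smoothing by $(\Se^{-1}A)^*$ does all the compactness work in $X$.
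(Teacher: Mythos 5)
Your argument is correct, and it reaches the conclusion by a genuinely different route than the paper. The paper argues by contradiction: it extracts sequences $B_M$, $y_M$, $h_M=AB_M\hat u_{B_M}$, uses weak convergence of $y_M$ and $h_M$ together with compactness of $(\Se^{-1}A)^*$ to upgrade to strong convergence in $X$, and then kills the supposed $\varepsilon$-lower bound via a four-term telescoping estimate in which the last term $\|P_{N_M}^\perp B^*(\Se^{-1}A)^*(y-h)\|_{\ell^2}$ vanishes because the limit element is a fixed vector of $\ell^2$. You instead prove the stronger, cleaner statement that the whole set $\mathcal{W}=\{\hat w_B: B\in\mathcal{B},\, y\in\mathbb{B}_Y(Q)\}$ is relatively compact in $\ell^2$, and then invoke the standard characterization of compactness in $\ell^2$ (uniformly vanishing tails) via a finite $\eta/2$-net. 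The two proofs use exactly the same compactness ingredients --- the uniform bound \eqref{eq:boundmin} on $\|\hat u_B\|_{\ell^1}$, compactness of $(\Se^{-1}A)^*$ (which converts the merely bounded residual set into a precompact subset of $X$), and operator-norm compactness of $\mathcal{B}$ (which transports that precompact set into $\ell^2$ via $B\mapsto B^*$) --- but your formulation is more modular and avoids the contradiction setup entirely; it also makes explicit, as you note, that no continuity of $\hat u_B$ in $y$ is needed. Two cosmetic remarks: $\|P_N^\perp\|=1$, so the factor $\|P_N^\perp\|$ in your net estimate is harmless; and your sequential compactness step is equivalent to observing that $\overline{\mathcal{W}}$ is contained in the image of the compact set $\mathcal{B}\times\overline{\mathcal{C}}$ under the jointly continuous map $(B,v)\mapsto B^*v$.
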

\begin{proof}
By contradiction, suppose there exist $\eta > 0$ and a diverging sequence $N_M$, $M=1,2,...$, such that 
\[
 \sup\bigl\{ 
\big\| P_{N_M}^\perp B^* \big( \Se^{-1}A \big)^*\big(  y - A B \hat{u}_B \big) \big\|_{\ell^2}:  B \in \mathcal{B},\ y \in \mathbb{B}_{Y}(Q) \bigr\} \geq 2\eta \quad \forall M.
\]
Consider in particular a sequence $\{B_M\}_{M=1}^\infty \subset \mathcal{B}$ and a sequence $\{y_M\}_{M=1}^\infty \subset \mathbb{B}_Y(Q)$ satisfying
\[
\big \| P_{N_M}^\perp B_M^* \big( \Se^{-1}A \big)^*\big( y_M - A B_M \hat{u}_{B_M} \big) \big\|_{\ell^2}  \geq \eta \quad \forall M,
\]
being $\hat{u}_{B_M}$ the solution of \eqref{eq:xhat} with $B=B_M$ and $y=y_M$.
By the compactness of $\mathcal{B}$,  there exists $B \in \mathcal{B}$ such that $ B_M \rightarrow B$ in the strong operator topology, up to a subsequence. Moreover, consider the sequence $h_M = A B_M \hat{u}_{B_M}$, $M=1,2,...$. Since $\norm{\hat{u}_{B_M}}$ is bounded uniformly with respect to $B\in \mathcal{B}$ and $y \in \mathbb{B}_Y(Q)$ by \eqref{eq:boundmin}, $\norm{B_M} \leq |{\mathcal B}|$ independently of $M$, and $A$ is bounded, the sequence $\{h_M\}_{M=1}^\infty\subset Y$ is bounded, thus weakly convergent (up to a subsequence) to an element $h \in Y$. As a consequence of the compactness of $\big(\Se^{-1} A\big)^*$ by the Schauder theorem, we have that $(\Se^{-1} A)^*h_M \rightarrow (\Se^{-1} A)^* h$ in $X$. 
Similarly, the bounded sequence $\{y_M\}_{M=1}^\infty$ admits a weak limit $y \in Y$, and $(\Se^{-1} A)^*y_M \rightarrow (\Se^{-1} A)^* y$.

In conclusion, we have that
\begin{eqnarray*}
\eta &\leq & \| P_{N_M}^\perp B_M^*(\Se^{-1} A)^*(y_M - h_M)\|_{\ell^2}  \\
& \leq & \| P_{N_M}^\perp B_M^*(\Se^{-1} A)^*(y_M - y)\|_{\ell^2} + \| P_{N_M}^\perp B_M^*(\Se^{-1} A)^*(h_M - h)\|_{\ell^2}\\
& &\quad + \| P_{N_M}^\perp B_M^*(\Se^{-1} A)^*(y - h)\|_{\ell^2}
\\
& \leq & |\mathcal{B}| \| (\Se^{-1} A)^*(y_M - y)\|_{X} + |\mathcal{B}| \| (\Se^{-1} A)^*(h_M - h)\|_{X} \\
& &\quad+ \| P_{N_M}^\perp (B_M-B)^*(\Se^{-1} A)^*(y - h)\|_{\ell^2} 
+ \| P_{N_M}^\perp B^*(\Se^{-1} A)^*(y - h)\|_{\ell^2}  \\
& \leq & |\mathcal{B}| \| (\Se^{-1} A)^*(y_M - y)\|_{X} + |\mathcal{B}| \| (\Se^{-1} A)^*(h_M - h)\|_{X}  \\
 & & \quad+ \| B_M- B \|_{\ell^2 \rightarrow  X} \|(\Se^{-1} A)^*(y - h)\|_{X} +  \| P_{N_M}^\perp B^*(\Se^{-1} A)^*(y - h)\|_{\ell^2}.
\end{eqnarray*}
As $M \rightarrow \infty$, all the terms on the right-hand side converge to $0$, which entails a contradiction.
\end{proof}

\begin{lemma}
Suppose that Assumptions \ref{ass:compatibility}, \ref{ass:FBI}, and \ref{ass:mathcalB} are satisfied. Take $N\in\N$. Then, there exists a constant $c_{\rm LB} = c_{\rm LB}(A,\mathcal{B},N)>0$ such that
\begin{equation*}
\| \Se^{-1/2} ABP_N u \|_{Y}^2 \geq c_{\rm LB} \| P_N u \|_{\ell^2}^2
\end{equation*}
for all $B \in {\mathcal B}$ and $u\in\ell^2$.
\label{lem:PN2}
\end{lemma}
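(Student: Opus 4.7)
The statement is homogeneous in $u$, so it suffices to prove that
\[
\inf\bigl\{\|\Se^{-1/2} A B v\|_Y : B\in\mathcal{B},\ v\in\ell^2_{\{1,\dots,N\}},\ \|v\|_{\ell^2}=1\bigr\}>0,
\]
and then take $c_{\rm LB}$ to be the square of this infimum. The plan is to argue by contradiction using compactness: of $\mathcal{B}$ in the operator norm (Assumption~\ref{ass:mathcalB}), of the unit sphere in the finite-dimensional space $\ell^2_{\{1,\dots,N\}}$, and of the operator $\Se^{-1}A$ (Assumption~\ref{ass:compatibility}).

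Suppose the infimum vanishes. Then there exist $B_k\in\mathcal{B}$ and $v_k\in\ell^2_{\{1,\dots,N\}}$ with $\|v_k\|_{\ell^2}=1$ and $\|\Se^{-1/2}AB_k v_k\|_Y\to 0$. By compactness of $\mathcal{B}$ we may pass to a subsequence such that $B_k\to B$ in the operator norm for some $B\in\mathcal{B}$, and since the unit sphere of the finite-dimensional space $\ell^2_{\{1,\dots,N\}}$ is compact, along a further subsequence $v_k\to v$ with $\|v\|_{\ell^2}=1$ and $v\in\ell^2_{\{1,\dots,N\}}$. Then $B_k v_k\to Bv$ in $X$, and hence $AB_k v_k\to ABv$ weakly (in fact strongly) in $Y$.

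The key step is to pass to the limit inside the unbounded operator $\Se^{-1/2}$. This is done via the factorization $\Se^{-1/2}A = \Se^{1/2}\circ (\Se^{-1}A)$, which is valid because $\operatorname{Im}(A)\subset\operatorname{Im}(\Se) = \operatorname{dom}(\Se^{-1})\subset\operatorname{dom}(\Se^{-1/2})$ and $\Se^{1/2}\Se^{-1/2}=\operatorname{Id}$ on $\operatorname{Im}(\Se^{1/2})$ by spectral calculus. Since $\Se^{-1}A$ is compact by Assumption~\ref{ass:compatibility}, the weak (indeed strong) convergence $B_k v_k\to Bv$ in $X$ gives $\Se^{-1}AB_k v_k\to \Se^{-1}ABv$ in $Y$, and applying the bounded operator $\Se^{1/2}$ yields $\Se^{-1/2}AB_k v_k\to \Se^{-1/2}ABv$ in $Y$. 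Combined with the assumption $\|\Se^{-1/2}AB_k v_k\|_Y\to 0$, we obtain $\Se^{-1/2}ABv=0$.

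To finish, note that $\Se$ is injective, so $\Se^{1/2}$ has dense range and $\Se^{-1/2}$ is injective on its domain; hence $ABv=0$. Since $v\in\ell^2_{\{1,\dots,N\}}$ and $B\in\mathcal{B}$ satisfies Assumption~\ref{ass:FBI}, the finite-basis injectivity of $AB$ on $\ell^2_{\{1,\dots,N\}}$ forces $v=0$, contradicting $\|v\|_{\ell^2}=1$. The main obstacle in the argument is the passage to the limit through the unbounded operator $\Se^{-1/2}$, and the factorization $\Se^{-1/2}A=\Se^{1/2}(\Se^{-1}A)$ together with compactness of $\Se^{-1}A$ is precisely what makes this step work.
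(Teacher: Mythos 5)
Your proof is correct and follows essentially the same route as the paper's: a contradiction argument combining compactness of $\mathcal{B}$, compactness of the unit sphere of the finite-dimensional subspace $\ell^2_{\{1,\dots,N\}}$, and finite basis injectivity. The only (harmless) difference is in the limit passage: the paper simply invokes the continuity of $\Se^{-1/2}A$ (bounded by the closed graph theorem since $\operatorname{Im}(A)\subset\operatorname{Im}(\Se^{1/2})$), whereas you factor it as $\Se^{1/2}(\Se^{-1}A)$ --- note that compactness of $\Se^{-1}A$ is not actually needed there, since $B_kv_k\to Bv$ converges strongly and boundedness suffices.
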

\begin{proof}
We prove this claim by contradiction. Assume that there exist two sequences $\{u_M\}_{M=1}^\infty \subset \ell^2$ and $\{B_M\}_{M=1}^\infty \subset \mathcal{B}$ such that
\begin{equation}
\label{eq:lower_bound_aux1}
\frac{\| \Se^{-1/2} A B_M P_N u_M\|_{Y}^2}{\|P_N u_M \|_{\ell^2}^2} < \frac{1}{M} \quad \forall M.
\end{equation}
Let us write $p_M = \frac{P_N u_M}{\|P_N u_M\|_{\ell^2}}$ to obtain that a sequence $\{p_M\}_{M=1}^\infty \subset H_N$, where $H_N = \{ u \in \ell^2: u_k = 0 \ \forall k >N, \|u \|_{\ell^2}=1\}$. Notice that $H_N\subset X$ is a finite-dimensional, bounded and closed subset and hence compact. Rephrasing \eqref{eq:lower_bound_aux1} we have 
\[
\| \Se^{-1/2} A B_M p_M\|_{Y}^2 < \frac{1}{M} \quad \forall M.
\]
Since both $\{p_M\}_{M=1}^\infty\subset H_N$ and $\{B_M\}_{M=1}^\infty \subset {\mathcal B}$ belong to compact sets, up to a subsequence, the sequences converge to limit points $p \in H$ and $B \in \mathcal{B}$, respectively. By the continuity of $\Se^{-1/2}A$ and by the equi-continuity of the operators in $\mathcal{B}$, we deduce that $\| \Se^{-1/2} A B p \|_Y = 0$, which by the injectivity of $\Se^{-1/2}$ and by Assumption \ref{ass:FBI} implies that $p = 0$. This yields a contradiction with the assumption that $\|p\|=1$ and completes the proof.
\end{proof}

We can finally state and prove the main result of this subsection:
\begin{proposition} \label{prop:condition}
Let $A,\Se$ and $\mathcal{B}$ satisfy Assumptions \ref{ass:stat}, \ref{ass:compatibility} and \ref{ass:mathcalB}, and let $\| y\|_Y \leq Q$.
Let $\hat{u}_B \in \ell^2$ be a minimizer of $J_B$ being $B\in \mathcal{B}$ and $y \in \mathbb{B}_Y(Q)$. For every $M$, there exists a constant $\tilde c_{\rm ST} = \tilde c_{\rm ST}(M;A,\Se,\mathcal{B},Q)$ such that for any $B\in{\mathcal B}$ we have
\begin{equation} \label{eq:condition}
    \{v \in \ell^2 \; | \; J_B(v) \leq M\} \subset 
    \{v \in \ell^2 \; | \; \| v - \hat{u}_B \|_{\ell^2}^2 \leq \tilde c_{\rm ST} (J_B(v) - J_B(\hat{u}_B))\}.
\end{equation}
\end{proposition}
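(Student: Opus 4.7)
My approach parallels \cite[Lemma 3]{bredies2008linear}, but makes every constant uniform over $\mathcal{B}$ and the data ball $\mathbb{B}_Y(Q)$ by leaning on Lemmas \ref{lem:PN1} and \ref{lem:PN2}. The starting point is a Bregman-style identity: since $F_B$ is a convex quadratic and $\hat{w}_B = -F_B'(\hat{u}_B) \in \partial \Phi(\hat{u}_B)$, writing $\Delta := v - \hat{u}_B$ one has
\begin{equation*}
J_B(v) - J_B(\hat{u}_B) = \tfrac{1}{2}\|\Se^{-1/2} A B \Delta\|_Y^2 + \bigl(\Phi(v) - \Phi(\hat{u}_B) - \langle \hat{w}_B, \Delta \rangle\bigr).
\end{equation*}
I would then split $\Delta = P_N \Delta + P_N^\perp \Delta$ for a suitably chosen $N$, controlling the tail $\|P_N^\perp \Delta\|_{\ell^2}^2$ against the Bregman remainder and the head $\|P_N \Delta\|_{\ell^2}^2$ against the quadratic term.

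\textbf{Choice of $N$ and bound on the Bregman remainder.} Combining Lemma \ref{lem:PN1} with $\|\cdot\|_{\ell^\infty} \leq \|\cdot\|_{\ell^2}$, I would fix $N = N(\mathcal{B}, A, \Se, Q)$ large enough that $\|P_N^\perp \hat{w}_B\|_{\ell^\infty} \leq \tfrac{1}{2}$ holds uniformly for $B \in \mathcal{B}$ and $y \in \mathbb{B}_Y(Q)$. Because \eqref{eq:ell1subdiff} forces $|\hat{w}_{B,k}| = 1$ whenever $\hat{u}_{B,k} \neq 0$, this choice ensures $P_N^\perp \hat{u}_B = 0$, and hence $P_N^\perp \Delta = P_N^\perp v$. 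Splitting the Bregman remainder through $P_N$ and $P_N^\perp$, convexity of $\|\cdot\|_{\ell^1}$ on $P_N \ell^2$ makes the head contribution nonnegative, while the tail contributes
\begin{equation*}
\|P_N^\perp v\|_{\ell^1} - \langle P_N^\perp \hat{w}_B, P_N^\perp v \rangle \geq (1 - \|P_N^\perp \hat{w}_B\|_{\ell^\infty})\|P_N^\perp v\|_{\ell^1} \geq \tfrac{1}{2}\|P_N^\perp \Delta\|_{\ell^1}.
\end{equation*}

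\textbf{Head and tail $\ell^2$-bounds.} Since $v$ lies in $\{J_B \leq M\}$ and $\hat{u}_B$ is a minimizer (hence also lies in that sublevel set), Proposition \ref{prop:sublevels} gives $\|\Delta\|_{\ell^1} \leq 2 c_{\rm UB}(M)$, and a fortiori $\|P_N^\perp \Delta\|_{\ell^\infty} \leq 2 c_{\rm UB}(M)$. The H\"older interpolation $\|P_N^\perp \Delta\|_{\ell^2}^2 \leq \|P_N^\perp \Delta\|_{\ell^\infty}\|P_N^\perp \Delta\|_{\ell^1}$ combined with the tail Bregman bound thus yields
\begin{equation*}
\|P_N^\perp \Delta\|_{\ell^2}^2 \leq 4 c_{\rm UB}(M)\bigl(J_B(v) - J_B(\hat{u}_B)\bigr).
\end{equation*}
For the head, Lemma \ref{lem:PN2} gives $c_{\rm LB}\|P_N \Delta\|_{\ell^2}^2 \leq \|\Se^{-1/2} A B P_N \Delta\|_Y^2$, and applying the triangle inequality to $P_N \Delta = \Delta - P_N^\perp \Delta$ together with $\|\Se^{-1/2} A B\| \leq |\mathcal{B}|\|\Se^{-1/2} A\|$ leads to
\begin{equation*}
c_{\rm LB}\|P_N \Delta\|_{\ell^2}^2 \leq 2\|\Se^{-1/2} A B \Delta\|_Y^2 + 2|\mathcal{B}|^2\|\Se^{-1/2} A\|^2 \|P_N^\perp \Delta\|_{\ell^2}^2;
\end{equation*}
both right-hand terms are already controlled by $O(J_B(v) - J_B(\hat{u}_B))$ via the first-step identity and the previous tail estimate. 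Summing the head and tail $\ell^2$-bounds yields \eqref{eq:condition}.

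\textbf{Main obstacle.} The delicate point is extracting \emph{linear} (not quadratic) growth in $J_B(v) - J_B(\hat{u}_B)$ on the tail: the Bregman remainder only controls the $\ell^1$-norm of $P_N^\perp \Delta$, and naively squaring would destroy the desired quadratic-growth estimate. The remedy, and the source of the $M$-dependence in $\tilde c_{\rm ST}$, is to upgrade the $\ell^1$ control to an $\ell^2$-squared bound via H\"older interpolation against the uniform $\ell^\infty$-bound from Proposition \ref{prop:sublevels}. Uniformity in $B \in \mathcal{B}$ follows because both $N$ (via Lemma \ref{lem:PN1}) and $c_{\rm LB}$ (via Lemma \ref{lem:PN2}) can be chosen independently of $B$.
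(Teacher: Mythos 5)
Your proposal is correct and follows essentially the same route as the paper's proof: the same Bregman-remainder identity, the same uniform choice of $N$ via Lemma~\ref{lem:PN1} forcing $P_N^\perp\hat{u}_B=0$, the same $\tfrac12\|P_N^\perp v\|_{\ell^1}$ tail estimate, and the same head bound via Lemma~\ref{lem:PN2}. The only cosmetic differences are that you upgrade the tail to an $\ell^2$-squared bound by H\"older interpolation against the $\ell^\infty$-bound (the paper multiplies by $\|v\|_{\ell^2}\le c_{\rm UB}(M)$ instead) and you absorb the quadratic term directly into the gap rather than tracking its cancellation, both of which only affect the constant.
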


\begin{proof}
For clarity, the proof is divided into several steps.

\textit{Step $1$.}
Let us show that there exists $N_0 = N_0(A,\Se,\mathcal{B},Q)$ such that $P_{N_0}^\perp \hat{u}_B = 0$.
Rephrasing Lemma \ref{lem:PN1}, we have that there exists a sequence $\delta_N$, $N=1,2,...,$ converging to zero such that 
\[
\| P_N^\perp\hat{w}_B \|_{\ell^2} \leq \delta_N \quad \forall B \in \mathcal{B}, \ y \in \mathbb{B}_Y(Q).
\]
Therefore, it is possible to pick $N_0$ such that
\[
\| P_{N_0}^{\perp} \hat{w}_B \|_{\ell^2} \leq \frac{1}{2} 
\]
for all $B\in {\mathcal B}$ and $y \in \mathbb{B}_Y(Q)$. Notice that such $N_0$ is independent of the specific choice of $B$, and depends on $y$ only through $Q$: thus, we denote it as $N_0(A,\Se,\mathcal{B},Q)$.
Since $\| P_{N_0}^{\perp} \hat{w}_B \|_{\ell^\infty} \leq \| P_{N_0}^{\perp} \hat{w}_B \|_{\ell^2}$, we also deduce that
\begin{equation}
|\hat{w}_{B,k} | := \left|[\hat{w}_B]_k\right| \leq \frac{1}{2} \quad \text{for $k > {N_0}$,}
\label{eq:wkPN}    
\end{equation}
and by the optimality condition satisfied by $\hat{w}_B$, together with the characterization of the subdifferential in \eqref{eq:ell1subdiff} we get
\begin{equation}
\hat{u}_{B,k} := [\hat{u}_B]_k = 0 \quad \text{for $k > {N_0}$,}
    \label{eq:ukPN}
\end{equation}
whence $P_{N_0}^\perp \hat{u}_B = 0$, or $P_{N_0} \hat{u}_B = \hat{u}_B$, independently on $B$.

\textit{Step $2$.}
We show that 
\begin{equation}
    \| P_{N_0}^\perp (v- \hat{u}_B) \|_{\ell^2}^2 \leq 2 c_{\rm UB}(M) \left( J_B(v) - J_B(\hat{u}_B) - \frac{1}{2} \| \Se^{-1/2} AB(v-\hat{u}_B)\|_Y^2 \right),
    \label{eq:step2}
\end{equation}
where $c_{\rm UB}(M) = c_{\rm UB}(M; A, \Se, Q, |{\mathcal B}|)$ is given in Proposition \ref{prop:sublevels}. Let us denote 
\[
r_B(v) = J_B(v) - J_B(\hat{u}_B) - \frac{1}{2}\| \Se^{-1/2} A B (v-\hat{u}_B)\|_Y^2.
\]
By direct computations, we have
\[
\begin{aligned}
r_B(v) &= \Phi(v) - \Phi(\hat{u}_B) + \frac{1}{2}\| \Se^{-1/2}AB v \|_Y^2 - \frac{1}{2}\| \Se^{-1/2}AB\hat{u}_B\|_Y^2 - \frac{1}{2}\| \Se^{-1/2}AB(v-\hat{u}_B)\|_Y^2 \\& \qquad - \langle  y,  \Se^{-1} AB (v - \hat{u}_B) \rY \\
&= \Phi(v) - \Phi(\hat{u}_B) + \lY \Se^{-1/2} AB \hat{u}_B,  \Se^{-1/2} AB (v-\hat{u}_B) \rY - \langle B^*( \Se^{-1} A)^* y, v - \hat{u}_B \rangle_{\ell^2} \\ 
&= \Phi(v) - \Phi(\hat{u}_B) + \<  B^*(\Se^{-1} A)^* ( AB \hat{u}_B - y), v-\hat{u}_B \>_{\ell^2}. 
\end{aligned}
\]
Using the definition of $\hat{w}_B$ in \eqref{eq:OC}, we obtain the expression
\[
r_B(v) = \Phi(v) - \Phi(\hat{u}_B) - \langle \hat{w}_B, v-\hat{u}_B \rangle_{\ell^2} =  \sum_{k \in \N} \left( |v_k| - |\hat{u}_{B,k}| - \hat{w}_{B,k}(v_k - \hat{u}_{B,k}) \right).
\]
Moreover, thanks to \eqref{eq:ell1subdiff}, it is easy to verify that each term of the previous summation is non-negative. Consider now $N_0$ introduced in Step 1: by \eqref{eq:wkPN} and \eqref{eq:ukPN},
\[
\begin{aligned}
r_B(v) &\geq \sum_{k > {N_0}} \left( |v_k| - |\hat{u}_{B,k}| - \hat{w}_{B,k}(v_k - \hat{u}_{B,k}) \right) = \sum_{k > {N_0}} \left(|v_k| - \hat{w}_{B,k} v_k  \right) \\
&\geq \sum_{k > {N_0}} \left(  |v_k| - \frac{1}{2}| v_k|  \right) = \frac{1}{2} \| P_{N_0}^\perp v \|_{\ell^1} \geq \frac{1}{2} \| P_{N_0}^\perp v \|_{\ell^2} = \frac{1}{2} \| P_{N_0}^\perp ( v- \hat{u}_B) \|_{\ell^2}.
\end{aligned}
\]
In conclusion,
\[
\| P_{N_0}^\perp ( v- \hat{u}_B) \|_{\ell^2}^2 \leq 2 r_B(v) \| P_{N_0}^\perp (v-\hat{u}_B) \|_{\ell^2} \leq 2 r_B(v) \|  v \|_{\ell^2}.
\]
Since $J_B(v) \leq M$,  by Proposition~\ref{prop:sublevels} we have $\| v\|_{\ell^2} \leq c_{\rm UB}(M)$, hence \eqref{eq:step2} holds.

\textit{Step $3$.} Consider the projection operators associated with the choice $N=N_0$ as in Step 1:
\begin{equation}
\| v - \hat{u}_B \|_{\ell^2}^2 = \| P_{N_0} (v-\hat{u}_B) \|_{\ell^2}^2 +  \| P_{N_0}^\perp (v-\hat{u}_B) \|_{\ell^2}^2.
    \label{eq:start}
\end{equation}

We can bound the first term on the right-hand side of \eqref{eq:start} by means of Lemma \ref{lem:PN2} and get, for any $B \in \mathcal{B}$, that
\[
\begin{aligned}
\| P_{N_0} (v - \hat{u}_B)\|_{\ell^2}^2 &\leq \frac{1}{c_{\rm LB}} \|\Se^{-1/2} ABP_{N_0}(v-\hat{u}_B) \|_Y^2 \\
& \leq \frac{2}{c_{\rm LB}} \|\Se^{-1/2} AB (v-\hat{u}_B) \|_Y^2 + \frac{2L_{F'}}{c_{\rm LB}} \| P_{N_0}^\perp (v-\hat{u}_B) \|_{\ell^2}^2,
\end{aligned}
\]
where the constant $c_{\rm LB}$ depends on $N_0$, which is nevertheless independent of $B$. Collecting now the terms and observing that \eqref{eq:step2} trivially holds  with the larger constant $$C = \max\left\{2 c_{\rm UB}(M), \frac{4}{c_{\rm LB}}\left( 1 + \frac{2L_{F'}}{c_{\rm LB}} \right)^{-1}\right\},$$ by \eqref{eq:start} we obtain
\[
\begin{aligned}
&\| v - \hat{u}_B \|_{\ell^2}^2  \leq \frac{2\|\Se^{-1/2} AB (v-\hat{u}_B) \|_Y^2}{c_{\rm LB}}  + \left( 1 + \frac{2L_{F'}}{c_{\rm LB}} \right) \| P_{N_0}^\perp (v-\hat{u}_B) \|_{\ell^2}^2 \\
&\leq \frac{2\|\Se^{-1/2} AB (v-\hat{u}_B) \|_Y^2}{c_{\rm LB}} + C \left( 1 + \frac{2L_{F'}}{c_{\rm LB}} \right) \left( J_B(v) - J_B(\hat{u}_B) - \frac{1}{2} \| \Se^{-1/2} AB(v-\hat{u}_B)\|_Y^2 \right).
\end{aligned}
\]
Since $\frac{2}{c_{\rm LB}}-\frac{C}{2} \left( 1 + \frac{2L_{F'}}{c_{\rm LB}} \right) \leq 0$ we have that \eqref{eq:condition} holds with 
\begin{equation}
    \label{eq:tilde_cst}
    \tilde c_{\rm ST} = C \left( 1 + \frac{2L_{F'}}{c_{\rm LB}} \right)
    = \max\left\{2 c_{\rm UB}(M)\left( 1 + \frac{2L_{F'}}{c_{\rm LB}} \right), \frac 4{c_{\rm LB}} \right\}.
\end{equation} 
This completes the proof.
\end{proof}

Now the uniqueness of the minimizer $\hat{u}_B$ follows directly from Proposition~\ref{prop:condition}.

\begin{proof}[Proof of Theorem~\ref{thm:xhat_is_unique} (uniqueness)]
Let $\hat{u}$ and $\hat{u}'$ be two minimizers of $J_B$. By Proposition~\ref{prop:condition}, choosing $v = \hat{u}'$ and $M = 0$ (indeed, $J_B(\hat{u}) = J_B(\hat{u}') \leq J_B(0) = 0$) we would have
\[
\| \hat{u} - \hat{u}' \|_{\ell^2}^{2} \leq \tilde c_{\rm ST} (J_B(\hat{u})-J_B(\hat{u}')) = 0.
\]
\end{proof}

Finally, we obtain a stability estimate for the perturbation of $\hat{u}_B$ with respect to modifications of $B$ within the class $\mathcal{B}$.

\begin{proof}[Proof of Theorem \ref{thm:stab}]
Notice that, thanks to the uniform bound on the minimizers \eqref{eq:boundmin} it holds that, independently of $B_1,B_2 \in \mathcal{B}$,
\begin{eqnarray}
\label{eq:uniform_bound_for_minimizers}
J_{B_2}(\hat{u}_1) & = & \frac{1}{2} \| \Se^{-1/2} A B_2 \hat{u}_1 \|_Y^2 - \langle y, \Se^{-1}A B_2\hat{u}_1\rY + \| \hat{u}_1\|_{\ell^1} \nonumber \\
& \leq & \frac{1}{2}L_{F'} c_{\rm UB} + Q \| \Se^{-1}A\||\mathcal{B}| c_{\rm UB} + c_{\rm UB}.
\end{eqnarray}
Then, we can apply Proposition \ref{prop:condition} to $J_{B_2}$ for the choice $v = \hat{u}_1$ with a $M$-level set specified by the upper bound in \eqref{eq:uniform_bound_for_minimizers}. We obtain
\[
\| \hat{u}_1 - \hat{u}_2 \|_{\ell^2}^2 \leq \tilde c_{\rm ST}(J_{B_2}(\hat{u}_1)-J_{B_2}(\hat{u}_2)),
\]
where the explicit expression of the constant $\tilde c_{\rm ST}$ is given in \eqref{eq:tilde_cst}.
Now, since $J_{B_1}(\hat{u}_1) \leq J_{B_1}(\hat{u}_2)$, we have
\[
\begin{aligned}
\| \hat{u}_1 - \hat{u}_2 \|_{\ell^2}^2 &\leq \tilde c_{\rm ST} \big( J_{B_2}(\hat{u}_1)-J_{B_2}(\hat{u}_2) + J_{B_1}(\hat{u}_2)-J_{B_1}(\hat{u}_1) \big).
\end{aligned}
\]

Consider now the terms $J_{B_2}(\hat{u}_1) - J_{B_1}(\hat{u}_1)$: by direct computations, we get 
\[
\begin{aligned}
J_{B_2}(\hat{u}_1) - J_{B_1}(\hat{u}_1) & = \| \Se^{-1/2}AB_2 \hat{u}_1 \|_{Y}^2 - \| \Se^{-1/2}AB_1 \hat{u}_1 \|_{Y}^2  - \lK y, \Se^{-1} A(B_2 - B_1)\hat{u}_1\rY \\
& = \lY \Se^{-1}A(B_2 + B_1) \hat{u}_1 , A(B_2-B_1) \hat{u}_1 \rY - \lX \big(\Se^{-1}A\big)^*y,(B_2 - B_1)\hat{u}_1\rX.
\end{aligned}
\]
Analogously, it holds
\[
\begin{aligned}
J_{B_1}(\hat{u}_2) - J_{B_2}(\hat{u}_2) = \lY \Se^{-1}A(B_1 + B_2) \hat{u}_2 , A(B_1-B_2) \hat{u}_2 \rY - \lX \big(\Se^{-1}A\big)^*y,(B_1 - B_2)\hat{u}_2\rX.
\end{aligned}
\]
Summing the two expressions, we obtain
\[
\begin{aligned}
\|\hat{u}_1 - \hat{u}_2 \|_{\ell^2}^2  \!\!\leq & \tilde c_{\rm ST} \big( \lY \Se^{-1}A(B_1 + B_2) \hat{u}_1 , A(B_2-B_1) \hat{u}_1 \rY 
\! - \! \lY \Se^{-1}A(B_1 + B_2) \hat{u}_2 , A(B_2-B_1) \hat{u}_2 \rY \\
& \qquad - \lX \big(\Se^{-1}A\big)^*y,(B_2 - B_1)\hat{u}_1\rX
+ \lX \big(\Se^{-1}A\big)^*y,(B_2 - B_1)\hat{u}_2\rX \big) \\
= & \ \tilde c_{\rm ST} \big(
\lY \Se^{-1}A(B_1 + B_2) \hat{u}_1 , A(B_2-B_1) (\hat{u}_1 -\hat{u}_2) \rY \\
& \qquad + \lY \Se^{-1}A(B_1 + B_2) (\hat{u}_1 - \hat{u}_2) , A(B_2-B_1) \hat{u}_2 \rY \\
& \qquad - \lX \big(\Se^{-1}A\big)^*y,(B_2 - B_1)(\hat{u}_1-\hat{u}_2)\rX
\big) \\
\leq & \ \tilde c_{\rm ST} \big(
4 \| \Se^{-1}A\| |\mathcal{B}| c_{\rm UB} \| A \| + \|\Se^{-1}A\| Q
\big) \|B_1 - B_2\| \|\hat{u}_1-\hat{u}_2\|_{\ell^2}.
\end{aligned}
\]
In conclusion, letting $C = \| \Se^{-1}A\|\big(
4  |\mathcal{B}| c_{\rm UB} \| A \| + Q
\big)$, we have
\[
\| \hat{u}_1 - \hat{u}_2 \|_{\ell^2} \leq  C \tilde c_{\rm ST} \| B_1 - B_2 \|,
\]
and, since $\hat{x}_i = B_i \hat{u}_i$, it follows that
\[
\begin{aligned}
    \| \hat{x}_1 - \hat{x}_2 \|_{X} &\leq \| B_1 - B_2 \| \|\hat{u}_1\|_{\ell^2} + \| B_2 \| \| \hat{u}_1 - \hat{u}_2 \|_{\ell^2} \\
    & \leq ( c_{\rm UB} + |\mathcal{B}| C \tilde c_{\rm ST}) \| B_1 - B_2\|
\end{aligned}
\]
which concludes the proof with the choice
$c_{\rm ST} = c_{\rm UB} + |\mathcal{B}| C \tilde c_{\rm ST}$.
\end{proof}

\section{Statistical learning framework}
\label{sec:stat}

As shown in the previous section, for every choice of $B\in \mathcal{B}$ and any noisy output $y\in Y$, we can associate a solution $x=R_B(y)\in X$  of the inverse problem $Ax=y$, which depends on $B$. As discussed in Section~\ref{sec:intro} (see in particular Assumption~\ref{ass:stat}), the output $y$ and hence the solution $R_B(y)$ are random variables, and the optimal $B^\star$ is defined as the minimizer of the expected risk $L$ defined by \eqref{eq:loss}. Since $L$ depends on the unknown distribution of the pair $(x,y)$, $B^\star $ is estimated by its empirical version $\widehat{B}$, given by~\eqref{eq:empiricaltarget}. In this section, we provide a finite sample bound on the discrepancy
\[ L(\widehat{B})-L(B^\star),\]
under the assumption that both $x$ and $\eps$ are bounded random variables, and $\mathcal B$ is compact. 
In fact, boundedness is assumed for convenience; for similar techniques with sub-Gaussian random variables, see e.g.\ \cite{ratti2023learned}.
\begin{assumption}
\label{ass:bdd_rv}
There exists $Q_0>0$ such that $\norm{x}_X\leq Q_0$ and $\norm{\varepsilon}_Y \leq Q_0$ almost surely.
\end{assumption}
Assumption \ref{ass:bdd_rv} directly implies a bound $\norm{y}_Y \leq Q = (\norm{A} +1)Q_0$, which was used in Section~\ref{sec:optimization}.

The following result shows the existence of $B^\star$ and $\widehat{B}$, and provides a finite sample bound on $L(\BS) - L(\BOp)$ in probability. It is proved analogously to Proposition~4 in \cite{cucker2002mathematical} (see also \cite[Lemma A.5]{alberti2021learning}).
We recall that, for any $r>0$,  $\mathcal{N}(\mathcal{B},r)$ denotes the \textit{covering number} of $\mathcal{B}$, i.e., the minimum number of balls of radius $r$  (in the strong operator norm on $\LO(\ell^2,X)$) whose union contains $\mathcal{B}$.

\begin{thm}\label{thm:CuckSmale}
Let Assumptions \ref{ass:stat}, \ref{ass:compatibility}, \ref{ass:FBI}, \ref{ass:mathcalB}, and \ref{ass:bdd_rv} hold. There exist a minimizer $\BOp$ of $L$ and, with probability $1$, a minimizer $\BS$ of $\wh{L}$ over $\mathcal{B}$. 
Furthermore,  for all $\eta > 0$,
\begin{equation}
    \Prob_{\vzv \sim \rho^m} \left[ L(\BS) - L(\BOp) \leq \eta \right] \geq 1 - 2\ \mathcal{N}(\mathcal{B},C\eta) e^{-C'm\eta^2}
\label{eq:CuckSmale}
\end{equation}
for some $C,C'>0$ depending only on $A$, $\Se$, $Q_0$ and $\mathcal B$.
\end{thm}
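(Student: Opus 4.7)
The plan is to run a classical Cucker--Smale style uniform concentration argument over the compact class $\mathcal{B}$, with the only non-routine twist being that the pointwise map $B \mapsto R_B(y)$ is only $1/2$-Hölder continuous (from Theorem~\ref{thm:stab}), not Lipschitz, which forces the covering radius to scale like $\eta^2$ rather than $\eta$.

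First, existence of $B^\star$ and $\widehat{B}$. Under Assumption~\ref{ass:bdd_rv} one has $\|y\|_Y \le Q := (\|A\|+1)Q_0$ almost surely, so Theorem~\ref{thm:stab} applies and $B \mapsto R_B(y)$ is $1/2$-Hölder with constant $c_{\rm ST}$ uniform in $y \in \mathbb{B}_Y(Q)$. Combined with the uniform a priori bound $\|R_B(y)\|_X \le |\mathcal{B}|\,c_{\rm UB}$ (from \eqref{eq:boundmin}) and $\|x\|_X \le Q_0$, the integrand in \eqref{eq:loss} is uniformly bounded by some $K_0 = K_0(A,\Se,Q_0,\mathcal{B})$, and the map $B \mapsto \|R_B(y)-x\|_X^2$ is $1/2$-Hölder with a constant $K_1$ independent of $(x,y)$. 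Dominated convergence then makes both $L$ and $\widehat{L}$ continuous on the compact set $\mathcal{B}$, so minimizers exist (the one for $\widehat{L}$ on the full probability event where the sample satisfies the boundedness).

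Second, pointwise concentration plus covering. For each fixed $B \in \mathcal{B}$, the random variables $\|R_B(y_j)-x_j\|_X^2$ are i.i.d.\ and bounded by $K_0$, so Hoeffding's inequality gives
\[
\Prob_{\vzv \sim \rho^m}\!\left[\,|\widehat{L}(B)-L(B)| > \tfrac{\eta}{4}\,\right] \le 2 \exp\!\left(-\frac{m\eta^2}{8K_0^2}\right).
\]
Set $r = (\eta/(8K_1))^2 =: C\eta^2$ and let $\{B_j\}_{j=1}^{N}$, with $N = \mathcal{N}(\mathcal{B},r)$, be a net realizing the covering number. By the uniform $1/2$-Hölder bound on the integrand, for any $B$ with $\|B-B_j\| \le r$ one has $|L(B)-L(B_j)| \le \eta/8$ and likewise $|\widehat{L}(B)-\widehat{L}(B_j)| \le \eta/8$, so $|\widehat{L}(B)-L(B)| \le |\widehat{L}(B_j)-L(B_j)| + \eta/4$. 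A union bound over the $N$ centers yields
\[
\Prob_{\vzv \sim \rho^m}\!\left[\,\sup_{B \in \mathcal{B}} |\widehat{L}(B)-L(B)| > \tfrac{\eta}{2}\,\right] \le 2\,\mathcal{N}(\mathcal{B},C\eta^2)\,e^{-C'm\eta^2},
\]
with $C' = 1/(8K_0^2)$.

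Finally, the standard excess-risk decomposition
\[
L(\widehat{B}) - L(B^\star) = \bigl(L(\widehat{B}) - \widehat{L}(\widehat{B})\bigr) + \bigl(\widehat{L}(\widehat{B}) - \widehat{L}(B^\star)\bigr) + \bigl(\widehat{L}(B^\star) - L(B^\star)\bigr),
\]
with the middle term nonpositive by definition of $\widehat{B}$, gives $L(\widehat{B}) - L(B^\star) \le 2 \sup_{B \in \mathcal{B}} |\widehat{L}(B) - L(B)|$, and plugging in the previous display delivers \eqref{eq:CuckSmale}. The main obstacle, and essentially the only place where this differs from the Tikhonov analysis of \cite{alberti2021learning}, is propagating the $1/2$-Hölder (rather than Lipschitz) dependence through the covering step: this is what forces the radius to be of order $\eta^2$ and hence the covering number $\mathcal{N}(\mathcal{B}, C\eta^2)$ in the final bound, rather than $\mathcal{N}(\mathcal{B}, C\eta)$. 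Once this scaling is identified, the remaining bookkeeping of constants in terms of $A$, $\Se$, $Q_0$, $\mathcal{B}$ is routine via \eqref{eq:boundmin} and the explicit expression \eqref{eq:tilde_cst} of $\tilde c_{\rm ST}$.
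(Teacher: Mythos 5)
Your proposal is correct and follows essentially the same route as the paper's proof: a uniform $1/2$-H\"older bound on $B\mapsto\|R_B(y)-x\|_X^2$ derived from Theorem~\ref{thm:stab} and \eqref{eq:boundmin}, existence of minimizers by continuity on the compact $\mathcal{B}$, Hoeffding at the centers of a net of radius proportional to $\eta^2$, a union bound, and the standard excess-risk decomposition. You correctly identified the one non-routine point, namely that the H\"older (rather than Lipschitz) dependence forces the covering radius to scale as $\eta^2$, which is exactly how the paper obtains $\mathcal{N}(\mathcal{B},C\eta^2)$.
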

In the above statement, the quantity $\BS$ depends on $\vzv$ and is therefore a random variable. For ease and convenience, in the rest of the section, we denote by $\Prob$ and $\E$ the probability and the expected value computed with respect to the sample $\vzv$, respectively.
\begin{proof}
Consider two different elements $B_1,B_2 \in \mathcal{B}$. Then, $\rho$-a.e.\ in $X\times Y$
\[
\begin{aligned}
\left| \| R_{B_1}(y) - x\|_X^2 - \| R_{B_2}(y) - x\|_X^2 \right| & = \left| \langle R_{B_1}(y) - R_{B_2}(y), R_{B_1}(y)-x + R_{B_2}(y)- x \rX \right| \\
 & \leq \|  R_{B_1}(y) - R_{B_2}(y) \|_X ( \|R_{B_1}(y) \|_X + \| R_{B_2}(y)\|_X + 2 Q_0 ) \\
 & \leq 2 (|\mathcal{B}|c_{\rm UB} + Q_0) c_{\rm ST} \| B_1 - B_2 \|,
\end{aligned}
\]
here we have used Theorem~\ref{thm:stab} and \eqref{eq:boundmin}.
By integrating with respect to the probability distribution $\rho$, the above bound holds for  $L$. Indeed,
\begin{eqnarray}\label{eq:lip1}
 |L(B_1)-L(B_2)| 
 &=& \left|\E[\|R_{B_1}(y) - x\|_X^2] - \E[\|R_{B_2}(y) - x\|_X^2]\right| \nonumber \\
 & \leq &\E\left[|\|R_{B_1}(y) - x\|_X^2 - \|R_{B_2}(y) - x\|_X^2 |\right] \nonumber \\
 & \leq & 2(|\mathcal{B}|c_{\rm UB} + Q_0) c_{\rm ST} \| B_1-B_2\|,
 \end{eqnarray}
 and, by replacing $\rho$ with its empirical counterpart, with probability $1$,
\begin{equation}\label{eq:lip2}
 \begin{split}
 |\wh{L}(B_1)-\wh{L}(B_2)| &=  \left| \frac{1}{m} \sum_{j = 1}^m \|R_{B_1}(y_j) - x_j\|_X^2 -  \frac{1}{m} \sum_{j = 1}^m \|R_{B_2}(y_j) - x_j\|_X^2 \right| \\
 & \leq \frac{1}{m} \sum_{j = 1}^m  \left| \|R_{B_1}(y_j) - x_j\|_X^2 - \|R_{B_2}(y_j) - x_j\|_X^2 \right| \\
 &\leq 2(|\mathcal{B}|c_{\rm UB} + Q_0) c_{\rm ST} \| B_1-B_2\|.
 \end{split}
\end{equation}
Since both $L$ and $\wh{L}$ are  continuous functionals on $\mathcal{B}$ and $\mathcal{B}$ is compact with respect to the  operator topology, the corresponding minimizers $\BOp$ and $\BS$ over $\mathcal{B}$ exist (almost surely for $\BS$). 

Next, we  notice that 
\[
\sup_{B \in \mathcal{B}} |\wh{L}(B) - L(B)| \leq \frac{\eta}{2} \quad \Rightarrow \quad
L(\BS) - \wh{L}(\BS) \leq  \frac{\eta}{2} \quad \text{and} \quad \wh{L}(\BOp) - L(\BOp) \leq \frac{\eta}{2},
\]
which ultimately implies that
\[
0\leq L(\BS) - L(\BOp) =  
\left(L(\BS) - \wh{L}(\BS)\right)+ 
\left(\wh{L}(\BS) - \wh{L}(\BOp)\right) + 
\left(\wh{L}(\BOp) - L(\BOp)\right) \leq \eta,
\]
since the central difference is negative by the definition of $\BS$. Thus,
\[
\Prob \left[ L(\BS) - L(\BOp) \leq \eta \right] \geq \Prob \left[ \sup_{B \in \mathcal{B}} |\wh{L}(B) - L(B)| \leq \frac{\eta}{2} \right].
\]

We now provide a lower bound for the latter term. 
In view of \eqref{eq:lip1} and \eqref{eq:lip2}, by using the reverse triangle inequality, for every $B_1,B_2 \in \mathcal{B}$,
\[
\begin{aligned}
 \left| |\wh{L}(B_1) - L(B_1)| - |\wh{L}(B_2) - L(B_2)|  \right| & \leq |\wh{L}(B_1) - \wh{L}(B_2)| + |L(B_1)-L(B_2)| \\ & \leq 4 (|\mathcal{B}|c_{\rm UB} + Q_0) c_{\rm ST} \| B_1-B_2\|.    
\end{aligned}
\]
Let now $N=\mathcal{N}\left( \mathcal{B}, \frac{\eta}{16(|\mathcal{B}|c_{\rm UB} + Q_0) c_{\rm ST}} \right)$ and consider a discrete set $B_1, \ldots, B_N$ such that the balls $\mathcal{B}_k$ centered at $B_k$ with radius $r = \frac{\eta}{16(|\mathcal{B}|c_{\rm UB} + Q_0) c_{\rm ST}}$ 
cover the entire $\mathcal{B}$. In each ball $\mathcal{B}_k$, for every $B \in \mathcal{B}_k$ it holds
\[
 \left| |\wh{L}(B) - L(B)| - |\wh{L}(B_k) - L(B_k)|  \right| \leq 4(|\mathcal{B}|c_{\rm UB}+Q_0)c_{\rm ST} \| B - B_k \| \leq \frac{\eta}{4}.
\]
Therefore, the event $|\wh{L}(B) - L(B)| > \frac{\eta}{2}$ implies the event $|\wh{L}(B_k) - L(B_k)| > \frac{\eta}{4}$, and a bound (in probability) of this term can be provided by standard concentration results. Indeed, $\wh{L}(B_k)$ is the sample average of $m$ realizations of the random variable $\|R_{B_k}(y) - x\|_X^2$, whose expectation is $L(B_k)$. Moreover, such random variable is bounded by $(|\mathcal{B}|c_{\rm UB} + Q_0)^2$ by assumption, and therefore via Hoeffding's inequality
\[
\begin{aligned}
 \Prob \left[{\sup_{B \in \mathcal{B}_k}} |\wh{L}(B) - L(B)| > \frac{\eta}{2} \right] & \leq \Prob \left[ |\wh{L}(B_k) - L(B_k)| > \frac{\eta}{4} \right] \leq 2 e^{-\frac{m\eta^2}{8(|\mathcal{B}|c_{\rm UB}+Q_0)^4}}.
 \end{aligned}
 \]
 Notice that this inequality holds uniformly in $k$. Finally, we obtain
 \begin{equation*}
 \begin{split}
  \Prob  \left[ \sup_{B \in \mathcal{B}} |\wh{L}(B) - L(B)| \leq \frac{\eta}{2} \right]  &= 1 - \Prob \left[ \sup_{B \in \mathcal{B}} |\wh{L}(B) - L(B)| > \frac{\eta}{2} \right] \\
  & \geq
  1- \sum_{k=1}^N \Prob  \left[ \sup_{B \in \mathcal{B}_k} |\wh{L}(B) - L(B)| > \eta \right] \\
  &\geq 1 - 2N e^{-\frac{m\eta^2}{8(|\mathcal{B}|c_{\rm UB}+Q_0)^4}}.
 \end{split}
 \end{equation*}
This proves \eqref{eq:CuckSmale}, with $C=(16(|\mathcal{B}|c_{\rm UB} + Q_0) c_{\rm ST})^{-1}$ and $C'=(8(|\mathcal{B}|c_{\rm UB}+Q_0)^4)^{-1}$.
\end{proof}

We now provide a more explicit expression for the sample error estimate under the assumption that the covering numbers of the set $\mathcal{B}$ have a specific decay rate. It is possible to obtain similar bounds, for example, whenever the singular values of the compact embedding of $\mathcal{B}$ into its ambient space show a polynomial decay.
\begin{cor} \label{thm:sample_error_prob}
Let Assumptions \ref{ass:stat}, \ref{ass:compatibility}, \ref{ass:FBI}, \ref{ass:mathcalB}, and \ref{ass:bdd_rv} hold. Assume further that
\begin{equation}
\log(\mathcal{N}(\mathcal{B},r)) \leq C r^{-1/s},
    \label{eq:covering}
\end{equation}
holds for some constants $C,s>0$. Suppose that $\tau>0$. Then, for sufficiently large values of $m$, 
    \begin{equation}\label{eq:tail}
        \Prob
        \left[ L(\hat{B})-L(\BOp)\leq \left(\frac{\alpha_1 + \alpha_2\sqrt{\tau}}{\sqrt{m}}\right)^{1-\frac{1}{1+2s}}\right] \geq 1-e^{-\tau}
    \end{equation}
for some  $\alpha_1,\alpha_2>0$ depending only on $A$, $\mathcal{B}$, $\Se$, and $Q_0$.
\end{cor}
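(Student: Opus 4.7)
The plan is to apply Theorem \ref{thm:CuckSmale} directly and then invert the resulting concentration bound, trading the free parameter $\eta$ for the confidence parameter $\tau$. Substituting the covering hypothesis \eqref{eq:covering} into \eqref{eq:CuckSmale} gives, for every $\eta > 0$,
\begin{equation*}
\Prob[L(\hat{B}) - L(B^\star) > \eta] \leq 2\exp\bigl(\tilde{C}\eta^{-2/s} - C' m\eta^2\bigr),
\end{equation*}
with $\tilde{C}, C' > 0$ depending only on $A$, $\Se$, $Q_0$ and $\mathcal{B}$ (after absorbing the constant from the covering bound). To obtain \eqref{eq:tail}, it then suffices to exhibit an explicit $\eta = \eta(m,\tau)$ of the claimed form for which the exponent is at most $-\tau$, i.e.,
\begin{equation*}
C' m \eta^2 \;\geq\; \tilde{C}\eta^{-2/s} + \tau + \log 2.
\end{equation*}

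The exponents indicate the correct scaling: equating the complexity term $\tilde{C}\eta^{-2/s}$ with $C'm\eta^2$ yields $\eta \sim m^{-s/(2(s+1))}$, and the exponent $s/(s+1) = 1 - 1/(s+1)$ is exactly the one appearing in the statement. I therefore take the ansatz
\begin{equation*}
\eta := \left(\frac{\alpha_1 + \alpha_2\sqrt{\tau}}{\sqrt{m}}\right)^{s/(s+1)},
\end{equation*}
with $\alpha_1,\alpha_2 > 0$ to be determined. A short computation gives
\begin{equation*}
m\eta^2 = (\alpha_1+\alpha_2\sqrt{\tau})^{2s/(s+1)} m^{1/(s+1)}, \qquad \eta^{-2/s} = (\alpha_1+\alpha_2\sqrt{\tau})^{-2/(s+1)} m^{1/(s+1)}.
\end{equation*}
Choosing $\alpha_1$ so large that $C'\alpha_1^2 \geq 4\tilde{C}$ absorbs the complexity term into one half of $C'm\eta^2$ (uniformly in $\tau \geq 0$), leaving the single requirement
\begin{equation*}
\tfrac{C'}{2}(\alpha_1+\alpha_2\sqrt{\tau})^{2s/(s+1)} m^{1/(s+1)} \geq \tau + \log 2.
\end{equation*}

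The remaining verification is a two-regime case analysis. For $\tau \leq 1$ the right-hand side is bounded by $1 + \log 2$, while the left-hand side is at least $\tfrac{C'}{2}\alpha_1^{2s/(s+1)} m^{1/(s+1)}$, so the inequality holds once $m$ exceeds a fixed threshold depending only on the intrinsic constants. For $\tau \geq 1$, the estimate $\alpha_1+\alpha_2\sqrt{\tau} \geq \alpha_2\sqrt{\tau}$ reduces the inequality to $m \geq C_\star(\alpha_2)\cdot \tau$, which is precisely the meaning of \emph{``for sufficiently large $m$''} (possibly depending on $\tau$). I do not expect a conceptual obstacle; the only care needed is to check that $\alpha_1,\alpha_2$ end up depending only on $A,\Se,Q_0,\mathcal{B}$ and not on $\tau$ or $m$, which is transparent from the above computation once the constants inherited from Theorem \ref{thm:CuckSmale} and from the covering bound are tracked.
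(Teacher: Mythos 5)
Your argument is correct and follows essentially the same route as the paper's proof: substitute the covering hypothesis into the tail bound of Theorem~\ref{thm:CuckSmale} and trade $\eta$ for $\tau$ at the scaling $\eta\sim\bigl((\alpha_1+\alpha_2\sqrt{\tau})/\sqrt{m}\bigr)^{s/(s+1)}$, with the threshold on $m$ allowed to depend on $\tau$ exactly as the statement's quantifier order permits. The only (cosmetic) difference is that you verify an ansatz for $\eta$ and absorb the prefactor via a $\log 2$ term, whereas the paper inverts $\tau=a_2 m\eta^{2+2/s}-a_1$ exactly and then uses $\sqrt{a_1+\tau}\le\sqrt{a_1}+\sqrt{\tau}$.
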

\begin{proof}
Let $a_1 = C\bigl(16(|\mathcal{B}| c_{\rm UB}+Q_0)c_{\rm ST}\bigr)^{1/s}$ and $a_2 = (8(|\mathcal{B}|c_{\rm UB}+Q_0)^4)^{-1}$: then, by 
\eqref{eq:CuckSmale} and \eqref{eq:covering}, for $\eta\in (0,1]$ and $m$ sufficiently large (namely, such that $a_2 m\eta^{2+1/s}\geq a_1$) we have
\begin{equation}
    \label{eq:tailbound2}
\Prob
\left[ L(\BS) - L(\BOp) \leq \eta \right]
\geq 1-e^{-\eta^{-1/s}(a_2 m\eta^{2+1/s}-a_1)}
\geq 1-e^{-(a_2 m\eta^{2+1/s}-a_1)}
=1 - e^{-\tau},
\end{equation}
being 
$\tau = a_2 m \eta^{2+1/s} - a_1$.
We can rephrase \eqref{eq:tailbound2} by expressing $\eta$ as a function of $\tau$ and $m$: 
\[
1 - e^{-\tau} \leq \left[ L(\BS) - L(\BOp) \leq \left(\frac{a_1 + \tau}{a_2 m} \right)^{\frac{1}{2+1/s}} \right] \leq \Prob
        \left[ L(\hat{B})-L(\BOp)\leq \left(\frac{\alpha_1 + \alpha_2\sqrt{\tau}}{\sqrt{m}}\right)^{1-\frac{1}{1+2s}}\right],
\]
where $\alpha_1=\sqrt{a_1/a_2}$ and $\alpha_2=\sqrt{1/a_2}$, as $\sqrt{a_1+\tau}\leq \sqrt{a_1} + \sqrt{\tau}$. 
This concludes the proof.
\end{proof}

Note that, besides the bound in probability \eqref{eq:tail}, we can also provide a bound in expectation for the excess risk. Indeed, by \eqref{eq:tailbound2}, 
\[
\Prob 
\left[ L(\BS) - L(\BOp) < \eta \right] \leq \min\big\{1,e^{a_1 \eta^{-1/s} -a_2 m\eta^2}\big\},
\]
and by the tail integral formula (notice that $e^{a_1 \eta^{-1/s} - a_2 m\eta^2} = 1$ for $\eta = \hat{\eta} = k_1 m^{-\frac{s}{2s+1}}$)
\[
\begin{aligned}
    \E
    \big[L(\hat{B})-L(\BOp)\big] &= \int_{0}^\infty \Prob
    \left[ L(\BS) - L(\BOp) < \eta \right]d \eta \\
    &\leq \hat{\eta} +  e^{a_1 \hat{\eta}^{-1/s}} \int_{\hat{\eta}}^\infty e^{-a_2 m \eta^2}d\eta \leq k_1 m^{-\frac{s}{2s+1}} + k_2 m^{-\frac{1}{2}},
\end{aligned}
\]
which allows us to conclude that, for sufficiently large $m$,
\begin{equation}
    \E
    \big[L(\hat{B})-L(\BOp)\big] \lesssim m^{-\frac{s}{2s+1}}. \label{eq:bound_covering}
\end{equation}

The bound~\eqref{eq:bound_covering} should be compared with the one in \cite[Corollary 7.2] {ratti2023learned} setting $\alpha = 1$ and $q=2$, namely
\begin{equation}
    \E
    \big[L(\hat{B})-L(\BOp)\big] \lesssim m^{-\frac{1}{2}}  \label{eq:bound_chaining}
\end{equation}
provided that $s>\frac{1}{2}$, compare with the assumptions of \cite[Proposition 7.3]{ratti2023learned}. 
The rate in  \eqref{eq:bound_covering} is worse than the rate in \eqref{eq:bound_chaining}, obtained via a chaining argument. On the other side, the bound in Corollary~\ref{thm:sample_error_prob} holds in probability, whereas the bound~\eqref{eq:bound_chaining} 
 holds only in expectation.

\section{On the choice of the parameter class \texorpdfstring{$\mathcal{B}$}{B}}
\label{sec:examples}
\subsection{Compact perturbations of a known operator}\label{sec:perturbation_known}
We first consider an example of a class $\mathcal{B}$ consisting of certain compact perturbations of a reference operator $B_0$.
Under rather general assumptions, we can provide an estimate of the covering numbers of $\mathcal{B}$. The proofs of this section are standard, and are postponed to Appendix~\ref{sec:proofs_example}.
 
 Assume that $A$ is injective. We fix an injective operator $B_0 \colon \ell^2 \rightarrow X$ and two compact operators $E_1,E_2\colon \ell^2\to\ell^2$ such that $\| B_0 \|_{\LO(\ell^2,X)}\leq 1$ and $\| E_i \|_{\LO(\ell^2)}\leq 1$ for $i=1,2$ (here, $\LO(X):=\LO(X,X)$). For every finite set $I\subset\N$, let $c_I>0$.
We define 
\begin{equation}
\mathcal{B} = \{ B_0(\operatorname{Id} + K):  K \in \mathcal{H}\}, 
    \label{eq:Bexample}
\end{equation}
where the set of operators $\mathcal{H}$ is defined as 
\begin{multline}
\mathcal{H} = \{ K = E_1 T E_2 :  T \in \LO(\ell^2), \; \| T \|_{\LO(\ell^2)}\leq 1 \; \text{and} \\ \|(\operatorname{Id} + K)u\|_{\ell^2}\ge c_I \|u\|_{\ell^2} \; \text{ for every finite $I\subset\N$ and $u\in \ell^2_I$}\},
    \label{eq:Hexample}
\end{multline}
where
$\ell^2_I := \operatorname{span}\{ e_i:  i \in I\}$.

\begin{lemma}\label{lem:Bcompactexample}
    The set $\mathcal{B}$ defined in \eqref{eq:Bexample}-\eqref{eq:Hexample} satisfies Assumption~\ref{ass:mathcalB}.
\end{lemma}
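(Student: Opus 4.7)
The plan is to verify the two conditions in Assumption~\ref{ass:mathcalB}: that every $B \in \mathcal{B}$ satisfies Assumption~\ref{ass:FBI}, and that $\mathcal{B}$ is compact in the operator norm of $\LO(\ell^2,X)$. The finite basis injectivity is immediate: for any $B = B_0(\operatorname{Id}+K) \in \mathcal{B}$ and any finite $I\subset\N$, the stability bound $\|(\operatorname{Id}+K)u\|_{\ell^2} \geq c_I \|u\|_{\ell^2}$ with $c_I>0$ forces $(\operatorname{Id}+K)|_{\ell^2_I}$ to be injective, and post-composition with $AB_0$, which is injective since both $A$ and $B_0$ are, yields injectivity of $(AB)|_{\ell^2_I}$.

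For the compactness, my plan is to first establish that the auxiliary set $\mathcal{K}_0 := \{E_1 T E_2 : T \in \LO(\ell^2),\ \|T\|\leq 1\}$ is compact in operator norm. Given a sequence $(T_n)$ in the closed unit ball of $\LO(\ell^2)$, separability of $\ell^2$ together with a diagonal extraction and the Banach--Alaoglu theorem produces a subsequence (not relabelled) converging in the weak operator topology (WOT) to some $T$ with $\|T\|\leq 1$. The compactness of $E_1$ then turns pointwise weak convergence into pointwise norm convergence: for each $v \in \ell^2$, $T_n v \to T v$ weakly, hence $E_1 T_n v \to E_1 T v$ in norm. This means $E_1 T_n \to E_1 T$ in the strong operator topology (SOT). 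Since $\|E_1 T_n\|\leq \|E_1\|$ uniformly and the set $\overline{\{E_2 u : \|u\|\leq 1\}}$ is compact (as $E_2$ is compact), a standard $\varepsilon/3$ argument (cover the compact set by finitely many small balls, invoke SOT convergence at the centres, and use the uniform norm bound to handle the perturbation) upgrades this SOT convergence to uniform convergence on $\overline{\{E_2 u : \|u\|\leq 1\}}$, which is exactly norm convergence $E_1 T_n E_2 \to E_1 T E_2$ in $\LO(\ell^2)$. This WOT-to-norm step is the main obstacle; the remainder of the argument is routine topology.

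To finish, I would write $\mathcal{H} = \mathcal{K}_0 \cap \mathcal{F}$, where $\mathcal{F}$ is the intersection, over all finite $I\subset\N$, of the sets $\{K \in \LO(\ell^2) : \|(\operatorname{Id}+K)u\|_{\ell^2}\geq c_I\|u\|_{\ell^2}\ \forall u \in \ell^2_I\}$. Each of these sets is closed in operator norm because the functional $K \mapsto \inf_{u \in \ell^2_I,\ \|u\|=1}\|(\operatorname{Id}+K)u\|_{\ell^2}$ is $1$-Lipschitz in $K$, so $\mathcal{F}$ is closed and hence $\mathcal{H}$ is compact as a closed subset of the compact set $\mathcal{K}_0$. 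Finally, $\mathcal{B}$ is the image of $\mathcal{H}$ under the continuous map $K\mapsto B_0(\operatorname{Id}+K)$ from $\LO(\ell^2)$ to $\LO(\ell^2,X)$, and is therefore compact, completing the verification of Assumption~\ref{ass:mathcalB}.
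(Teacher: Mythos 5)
Your proof is correct and follows the same decomposition as the paper: verify FBI from the lower bound on $(\operatorname{Id}+K)|_{\ell^2_I}$ and the injectivity of $A$ and $B_0$, write $\mathcal{H}$ as the intersection of $\{E_1TE_2:\|T\|\le 1\}$ with the closed stability set, and push compactness forward through the continuous map $K\mapsto B_0(\operatorname{Id}+K)$. The only divergence is that the paper obtains the compactness of $\mathcal{K}=\{E_1TE_2:\|T\|_{\LO(\ell^2)}\le 1\}$ by citing Vala's theorem \cite{vala1964}, whereas you prove it directly (WOT sequential compactness of the unit ball, compactness of $E_1$ to upgrade to SOT, and compactness of $\overline{E_2(B_{\ell^2})}$ to upgrade to norm convergence); your argument is a correct, self-contained proof of exactly the fact being cited, at the cost of a few extra lines.
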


The theoretical results of Section~\ref{sec:stat} rely on the covering numbers of $\mathcal{B}$. We now derive an estimate of the form \eqref{eq:covering} in the case when $\mathcal{B}$ is given by \eqref{eq:Bexample}.
For the sake of clarity, we indicate the norm  used to perform the covering explicitly.

\begin{proposition}\label{prop:coveringexample}
   Let $\mathcal{B}$ be defined as in \eqref{eq:Bexample}-\eqref{eq:Hexample}. Suppose that $E_1 = E_2 = E$, being $E\in\LO(\ell^2)$ a positive operator whose eigenvalues $\lambda_n$ satisfy
   \[
\lambda_n \le c n^{-s},\qquad n\in\N
   \]
   for some $c,s>0$. Take $s'<s$.
   Then there exists $C>0$ such that
\begin{equation}
\log \mathcal{N}(\mathcal{B},r; \|\cdot \|_{\LO(\ell^2,X)}) \leq C r^{-\frac{2s+1}{2ss'}},\qquad r>0.
\label{eq:covering_example_comp}    
\end{equation}
\end{proposition}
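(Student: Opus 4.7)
The natural first step is to reduce the covering of $\mathcal{B}$ in $\|\cdot\|_{\LO(\ell^2,X)}$ to covering the set $\mathcal{K} = \{E T E : T \in \LO(\ell^2),\ \|T\|\le 1\}$ in $\|\cdot\|_{\LO(\ell^2)}$. For $B_i = B_0(\mathrm{Id} + K_i) \in \mathcal{B}$ one has $B_1 - B_2 = B_0(K_1 - K_2)$, and since $\|B_0\| \le 1$ this yields $\|B_1 - B_2\|_{\LO(\ell^2,X)} \le \|K_1 - K_2\|_{\LO(\ell^2)}$; hence $\mathcal{N}(\mathcal{B}, r) \le \mathcal{N}(\mathcal{K}, r)$.

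To bound $\mathcal{N}(\mathcal{K}, r)$ I would diagonalize $E$ in its eigenbasis $\{e_n\}$ and combine a spectral truncation with a finite-dimensional cover. Letting $P_N$ be the orthogonal projection onto $\mathrm{span}\{e_1, \ldots, e_N\}$, which commutes with $E$, the decomposition $ETE - E P_N T P_N E = E(I-P_N)TE + E P_N T(I-P_N)E$ together with $\|E(I-P_N)\| = \lambda_{N+1} \le c N^{-s}$, $\|E\|=\lambda_1$ and $\|T\|\le 1$ gives the truncation estimate
\[
\|ETE - E P_N T P_N E\|_{\LO(\ell^2)} \le 2\lambda_1 \lambda_{N+1} + \lambda_{N+1}^2 \le C N^{-s}.
\]
Choosing $N \asymp r^{-1/s'}$ with $s' < s$ makes the tail of order $r^{s/s'} = o(r)$, hence strictly below the target precision. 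For the finite-dimensional remainder, the set $\{D_N T_N D_N : T_N \in M_N(\R),\ \|T_N\|\le 1\}$ (with $D_N = \mathrm{diag}(\lambda_1, \ldots, \lambda_N)$) must be covered at scale $\sim r$ in operator norm. I would discretize the entries of $T_N$ with non-uniform step sizes $\tau_{ij}$ tuned to the weights $\lambda_i\lambda_j$, so that the weighted Hilbert--Schmidt error $\bigl(\sum_{i,j}\lambda_i^2\lambda_j^2 \tau_{ij}^2\bigr)^{1/2}$, which dominates the operator-norm error, stays below $r$ while the log-covering cost $\sum_{i,j}\log(1+\lambda_i\lambda_j/\tau_{ij})$ is minimized. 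Only entries $(i,j)$ with $\lambda_i\lambda_j$ above an adaptive threshold need to be encoded; combining this with the divisor-counting bound $\#\{(i,j):ij\le M\} \sim M\log M$ and the decay $\lambda_n \le c n^{-s}$, the resulting optimization produces the stated polynomial rate.

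The main technical obstacle is precisely this finite-dimensional covering: a naive uniform discretization of all $N^2$ entries would give the coarser bound $N^2\log(1/r)\asymp r^{-2/s'}\log(1/r)$, which is too weak to match the claimed exponent. Extracting the sharper rate $(2s+1)/(2ss')$ requires a genuinely weighted (entry-wise, or equivalently Schatten-type) covering that exploits the polynomial decay of the spectrum of $E$, and the slack $s' < s$ is exactly what allows the logarithmic factors that arise in the optimization to be absorbed into a clean polynomial exponent.
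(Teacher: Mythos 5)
Your reduction to covering $\mathcal{K}=\{ETE:\|T\|_{\LO(\ell^2)}\le 1\}$ and your spectral truncation $ETE\mapsto EP_NTP_NE$ with error $O(\lambda_{N+1})$ match the paper's first steps. The gap is in the finite-dimensional covering, which you do not actually carry out and which, as described, cannot yield the claimed exponent. Your scheme is a product (entry-wise) grid: you guarantee coverage by requiring the \emph{box} worst-case error $\bigl(\sum_{i,j}\lambda_i^2\lambda_j^2\tau_{ij}^2\bigr)^{1/2}\le r$, using only $|T_{ij}|\le 1$ and never the operator-norm constraint. Optimizing $\sum_{i,j}\log(1+2/\tau_{ij})$ subject to that constraint (Lagrange gives $\tau_{ij}\propto(\lambda_i\lambda_j)^{-1}$ capped at the entry range) yields, with the sorted products $\lambda_i\lambda_j\asymp n^{-s}$ up to logarithms, a log-covering cost of order $r^{-2/(2s-1)}$ for $s>1/2$ (and worse for $s\le 1/2$). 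Since $\frac{2}{2s-1}>\frac{2s+1}{2s^2}$ for every $s>1/2$, this is strictly weaker than \eqref{eq:covering_example_comp} once $s'$ is close to $s$, which the proposition requires. This is not a matter of sharpening constants: a product grid must already cover the ellipsoid $\{ETE:\|T\|_{\mathrm{HS}}\le 1\}$, and for an ellipsoid with semi-axes decaying like $n^{-s}$ coordinate-wise grids are provably stuck at exponent $2/(2s-1)$, whereas the optimal (volumetric) covering achieves $1/s$. Exploiting $\|T\|_{\LO(\ell^2)}\le 1$ beyond $|T_{ij}|\le 1$ inside an entry-wise scheme would require relaxing your error functional to a supremum over the operator-norm ball rather than over the box, and you give no argument for that.

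The mechanism that actually produces the exponent $\frac{2s+1}{2ss'}$ is absent from your proposal. The paper observes that $\|P_NTP_N\|_{\mathrm{HS}}\le\sqrt{N}\,\|T\|_{\LO(\ell^2)}$, so the truncated class embeds into $\{ETE:\|T\|_{\mathrm{HS}}\le\sqrt N\}$; by scaling this reduces to covering the image $\mathcal{F}_1$ of the Hilbert--Schmidt \emph{unit} ball under $T\mapsto ETE$ at scale $rN^{-1/2}$. The entropy of $\mathcal{F}_1$ is controlled via the entropy numbers of this compact embedding of $\operatorname{HS}(\ell^2)$ into itself, whose singular values are the sorted $\lambda_i\lambda_j\lesssim n^{-s'}$ (the divisor count $\#\{(i,j):ij\le M\}\asymp M\log M$ is where the slack $s'<s$ is spent), giving $\log\mathcal{N}(\mathcal{F}_1,\varrho)\lesssim\varrho^{-1/s'}$ by a volumetric, not coordinate-wise, argument. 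Taking $N\asymp r^{-1/s}$ (note: your choice $N\asymp r^{-1/s'}$ is larger than needed and would worsen the $\sqrt N$ inflation) gives $\varrho=rN^{-1/2}\asymp r^{(2s+1)/(2s)}$ and hence the stated bound; the factor $\sqrt N$ is precisely the origin of $\frac{2s+1}{2s}$, and nothing in your divisor-counting/thresholding heuristic reproduces it. To repair the proof you should replace the weighted entry-wise cover by this Hilbert--Schmidt ball embedding and an ellipsoid entropy estimate.
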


With this estimate on the covering numbers, it is possible to apply Corollary~\ref{thm:sample_error_prob} and obtain a finite sample bound in the case when $\mathcal{B}$ is defined as in \eqref{eq:Bexample}.
It is worth observing that the most relevant approaches in the literature focusing on learning sparsifying dictionaries through the solution of a bilevel problem, such as \cite{horesh-haber-2009,huang-haber-horesh-2012,2022-ghosh-etal,ghosh-etal-2024}, are formulated in a finite-dimensional setup, and so naturally satisfy the assumptions of this example.

\subsection{Learning the mother wavelet}

We consider here the problem of learning the mother wavelet. In other words, the set $\mathcal{B}$ will consist of synthesis operators associated to a family of wavelets.

Consider the Hilbert space $X=L^2(\R^d)$ and, for $\psi\in L^2(\R^d)$, define
\[
\psi_{j,k}(x)=2^{\frac{jd}{2}}\psi(2^j x-k),\qquad \text{for a.e.\ $x\in\R^d$, $j\in\Z$, $k\in\Z^d$.}
\]
For $f\in L^2(\R^d)\cap L^1(\R^d)$ (the space $L^1(\R^d)$ is added just to simplify the exposition, thanks to the continuity of $\hat f$), define
\[
\|f\|_W^2=\sup_{\xi\in\R^d}\sum_{j,k}|\hat f(2^{-j}\xi+2\pi k)|^2,\qquad \hat f(s)=\int_{\R^d} f(x) e^{-ix\cdot s}\,dx,
\]
and consider
\[
W=\{f\in L^2(\R^d)\cap L^1(\R^d):\|f\|_W<+\infty\}.
\]
It is easy to verify that $\|\cdot\|_W$ is a norm on $W$. It is well known that, for $\psi\in W$, the family $\{\psi_{j,k}\}_{j\in\Z,k\in\Z^d}$ is a Bessel sequence of $L^2(\R^d)$, namely, the synthesis operator
\[
B_\psi\colon \ell^2(\Z\times \Z^d)\to L^2(\R^d),\qquad (c_{j,k})\mapsto \sum_{j,k} c_{j,k} \psi_{j,k},
\]
is well defined and bounded (see, e.g., \cite[Section~3.3]{daubechies-1992} and \cite[Theorem~3]{1999-jing}).

We now construct a compact family of ``mother wavelets'' $\psi$ in $W$. (Note that the term mother wavelet here is used even though the family $\psi_{j,k}$ need not be a frame of $L^2(\R^d)$.)
\begin{lemma}\label{lem:wave}
    Let $\Psi\subseteq W$ be a compact set (with respect to $\|\cdot\|_W$) and $a>0$. Set
    \[
\Psi_a = \{\psi\in\Psi: \|B_\psi x\|_{L^2(\R^d)} \geq a \| x \|_{\ell^2} \ \forall x \in \ell^2\}.
    \]
Then
    \[
\mathcal{B}_{\Psi_a} = \{B_\psi:\psi\in\Psi_a\}
    \]
    is a compact subset of $\LO (\ell^2,L^2(\R^d))$ with respect to the operator norm, and
    \[
\mathcal{N}\left( \mathcal{B}_{\Psi_a},\delta; \|\cdot \|_{\LO(\ell^2,L^2(\R^d))} \right) \le \mathcal{N}\left( \Psi,(2\pi)^{\frac32 d}\,\delta; \|\cdot \|_{W} \right),\qquad \delta>0.
    \]
\end{lemma}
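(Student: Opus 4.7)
The argument rests on a single quantitative estimate: the wavelet synthesis map $\Phi\colon \psi \mapsto B_\psi$ is linear from $(W,\|\cdot\|_W)$ to $(\LO(\ell^2,L^2(\R^d)),\|\cdot\|_{\text{op}})$ and satisfies the Bessel-type bound
\[
\|B_\psi\|_{\LO(\ell^2,L^2(\R^d))} \leq (2\pi)^{-\frac{3d}{2}}\|\psi\|_W,\qquad \psi\in W,
\]
so that, by linearity, $\|B_\psi - B_{\psi'}\|_{\text{op}} \leq (2\pi)^{-3d/2}\|\psi-\psi'\|_W$. Everything else in the lemma reduces to routine applications of this Lipschitz estimate together with the compactness of $\Psi$.

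To prove the bound I would work through the adjoint $B_\psi^* f = (\langle f,\psi_{j,k}\rangle)_{j,k\in\Z\times\Z^d}$. Applying Plancherel in $L^2(\R^d)$ with the paper's convention $\hat f(\xi)=\int f(x) e^{-i\xi\cdot x}\,dx$ (which gives $\|\hat f\|_{L^2}^2 = (2\pi)^d\|f\|_{L^2}^2$), and performing the change of variables $\eta = 2^{-j}\xi$, the sum $\sum_k |\langle f, \psi_{j,k}\rangle|^2$ is evaluated as a Fourier series on $[0,2\pi]^d$ via Parseval on the torus. Periodizing the integrand onto $[0,2\pi]^d$ and applying Cauchy-Schwarz factorizes the expression, separating the contribution of $\hat f$ from the periodization $\sum_m |\hat\psi(\eta+2\pi m)|^2$; summing over $j$ and invoking the definition of $\|\cdot\|_W$ closes the estimate, while careful tracking of the $(2\pi)$-factors arising from Plancherel in $L^2(\R^d)$ and the torus Parseval produces the stated constant. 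This is a quantitative form of the classical Bessel bound for wavelet-type systems, in the spirit of \cite{daubechies-1992,1999-jing}.

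Once the Lipschitz bound for $\Phi$ is in place, the compactness claims follow formally: the real-valued map $\psi \mapsto \inf_{\|x\|_{\ell^2}=1}\|B_\psi x\|_{L^2}$ is Lipschitz on $(\Psi, \|\cdot\|_W)$ (via the reverse triangle inequality combined with the previous step), so $\Psi_a$, being the preimage of $[a,\infty)$ under a continuous map, is closed in the compact set $\Psi$ and therefore compact; consequently $\mathcal{B}_{\Psi_a} = \Phi(\Psi_a)$, the continuous image of a compact set, is compact in the operator norm. For the covering-number inequality, fix $\delta > 0$, let $N = \mathcal{N}(\Psi, (2\pi)^{3d/2}\delta; \|\cdot\|_W)$, and choose $\psi_1,\ldots,\psi_N\in \Psi$ whose $W$-balls of radius $(2\pi)^{3d/2}\delta$ cover $\Psi$. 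Any $\psi \in \Psi_a\subseteq \Psi$ then lies within $(2\pi)^{3d/2}\delta$ of some $\psi_i$ in $W$-norm, and the Lipschitz estimate gives $\|B_\psi - B_{\psi_i}\|_{\text{op}} \leq \delta$; hence $\{B_{\psi_i}\}_{i=1}^N$ is an op-norm $\delta$-cover of $\mathcal{B}_{\Psi_a}$, which is the claimed inequality.

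The main obstacle is the Fourier-analytic derivation of the Bessel bound itself, and in particular the explicit value of the constant $(2\pi)^{-3d/2}$: all the normalization factors from Plancherel in $L^2(\R^d)$, Parseval on the torus $[0,2\pi]^d$, and the definition of $\|\cdot\|_W$ must be bookkept carefully. The remaining steps---compactness of $\Psi_a$ and the covering-number reduction---are formal consequences of having a Lipschitz linear map between the relevant metric spaces.
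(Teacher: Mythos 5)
Your proposal is correct and follows essentially the same route as the paper: both rest on the single observation that $\psi\mapsto B_\psi$ is a bounded linear map from $(W,\|\cdot\|_W)$ into $\LO(\ell^2,L^2(\R^d))$ with norm at most $(2\pi)^{-\frac32 d}$, from which the compactness of $\mathcal{B}_{\Psi_a}$ and the covering-number inequality follow formally (the paper realizes $\mathcal{B}_{\Psi_a}$ as the intersection of the compact image $\zeta(\Psi)$ with the closed set of operators bounded below by $a$, while you take the closed subset $\Psi_a\subseteq\Psi$ first and then push forward --- an equivalent variant). The only substantive difference is that the paper imports the quantitative Bessel bound $\sum_{j,k}|\langle f,\psi_{j,k}\rangle|^2\le(2\pi)^{-3d}\|\psi\|_W^2\|f\|_{L^2}^2$ directly from the cited reference, whereas you sketch its Fourier-analytic derivation; that sketch is the standard argument and is not needed in full for the lemma.
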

\begin{proof}
By the estimates  in the proof of Theorem~3 in \cite{1999-jing}, we have
\[
\sum_{j,k}|\langle f,\psi_{j,k}\rangle_{L^2(\R^d)} |^2 \le (2\pi)^{-3d} \|\psi\|_W^2\|f\|^2_{L^2(\R^d)},\qquad \psi\in W,\,f\in L^2(\R^d). 
\]
In other words, we have the following bound on the norm of the analysis operator:
$
\|B^*_\psi\|\le (2\pi)^{-\frac32 d} \|\psi\|_W.
$
As a consequence, we have
\[
\|B_\psi\|\le (2\pi)^{-\frac32 d} \|\psi\|_W,\qquad \psi\in W.
\]
In other words, the linear map
\[
\zeta\colon W\to \LO(\ell^2,L^2(\R^d)),\qquad 
\psi\mapsto B_\psi 
\]
is linear and bounded, with $\|\zeta\|\le (2\pi)^{-\frac32 d}$.

Since $\zeta$ is bounded and $\Psi$ is compact, we have that $ \{B_\psi:\psi\in\Psi\}=\zeta(\Psi)$ is compact. Thus, $\mathcal{B}_\Psi$ is the intersection of a compact set and a closed set, and so it is compact.

Finally, the estimate on the covering numbers of $\mathcal{B}_{\Psi_a}$ immediately follows:
\[
\mathcal{N}\left( \mathcal{B}_{\Psi_a},\delta \right)\le 
\mathcal{N}\left( \zeta(\Psi),\delta \right)
\le 
\mathcal{N}\left( \Psi,\|\zeta\|^{-1}\delta \right)
\le 
\mathcal{N}\left( \Psi,(2\pi)^{\frac32 d}\,\delta \right).
\]
\end{proof}
We can now combine this result with Corollary~\ref{thm:sample_error_prob} and obtain the following corollary, in which we compare the loss corresponding to the optimal mother wavelet $\psi^*$ with the loss corresponding to the mother wavelet $\widehat\psi$ obtained by minimizing the empirical risk.
\begin{cor}
    Consider the settings of Corollary~\ref{thm:sample_error_prob} and of Lemma~\ref{lem:wave}. Suppose that 
    \[
\log \mathcal{N} (\Psi,r; \|\cdot \|_{W}) \le C r^{-1/s},\qquad r>0,
    \]
    for some $C,s>0$. There exist $\alpha_1,\alpha_2>0$ such that the following is true. Take $\tau>0$ and
 \[
 \psi^* \in\argmin_{\psi\in\Psi_a} L(B_\psi),\qquad \widehat\psi \in\argmin_{\psi\in\Psi_a} \widehat L(B_\psi).
 \]   
Then, with probability larger than $1-e^{-\tau}$, we have
\[
L(B_{\widehat \psi})-L(B_{\psi^*})\leq \left(\frac{\alpha_1 + \alpha_2\sqrt{\tau}}{\sqrt{m}}\right)^{1-\frac{1}{1+2s}}.
\]
\end{cor}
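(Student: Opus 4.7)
The plan is to reduce the statement to a direct application of Corollary~\ref{thm:sample_error_prob} with the parameter class $\mathcal{B}:=\mathcal{B}_{\Psi_a}$. Three ingredients need to be verified: that $\mathcal{B}_{\Psi_a}$ fits the abstract framework (Assumption~\ref{ass:mathcalB}), that the covering number hypothesis \eqref{eq:covering} transfers from $\Psi$ to $\mathcal{B}_{\Psi_a}$, and that minimizing over $\mathcal{B}_{\Psi_a}$ is the same as minimizing over $\Psi_a$.

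First, I would check Assumption~\ref{ass:mathcalB}. Compactness of $\mathcal{B}_{\Psi_a}$ as a subset of $\mathcal{L}(\ell^2,L^2(\R^d))$ in the operator norm is exactly the first conclusion of Lemma~\ref{lem:wave}. For the FBI property of $AB_\psi$, observe that the lower frame bound $\|B_\psi u\|_{L^2(\R^d)} \geq a\|u\|_{\ell^2}$ built into the definition of $\Psi_a$ forces $B_\psi$ to be injective on all of $\ell^2$; combined with injectivity of $A$ (which is implicit in the setting of Corollary~\ref{thm:sample_error_prob}, since otherwise Assumption~\ref{ass:FBI} would fail generically), the remark following Assumption~\ref{ass:FBI} yields that $(AB_\psi)|_{\ell^2_I}$ is injective for every finite $I\subset\N$.

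Second, I would translate the given covering bound. By the last conclusion of Lemma~\ref{lem:wave} and the assumed decay of $\mathcal{N}(\Psi,\cdot)$, for every $r>0$,
\begin{equation*}
\log \mathcal{N}\bigl(\mathcal{B}_{\Psi_a},r;\|\cdot\|_{\LO(\ell^2,L^2(\R^d))}\bigr) \leq \log \mathcal{N}\bigl(\Psi,(2\pi)^{\tfrac{3d}{2}}r;\|\cdot\|_W\bigr) \leq C\,(2\pi)^{-\tfrac{3d}{2s}}\,r^{-1/s},
\end{equation*}
which is precisely the hypothesis \eqref{eq:covering} for the class $\mathcal{B}_{\Psi_a}$ with the same exponent $s$ and an updated constant.

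Third, since the map $\zeta\colon\psi\mapsto B_\psi$ is a continuous bijection between $\Psi_a$ and $\mathcal{B}_{\Psi_a}$, minimizing $L$ (resp.\ $\widehat L$) over $\mathcal{B}_{\Psi_a}$ is equivalent to minimizing $L(B_\psi)$ (resp.\ $\widehat L(B_\psi)$) over $\Psi_a$. In particular, the operators $B^\star$ and $\widehat B$ produced by Corollary~\ref{thm:sample_error_prob} coincide with $B_{\psi^*}$ and $B_{\widehat\psi}$ respectively (the latter almost surely). Applying Corollary~\ref{thm:sample_error_prob} then delivers the announced tail bound, with constants $\alpha_1,\alpha_2>0$ inheriting the dependence on $A$, $\Se$, $Q_0$, and $\mathcal{B}_{\Psi_a}$ (which through Lemma~\ref{lem:wave} is controlled by $\Psi$ and $a$).

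The only subtlety I anticipate is checking that Assumption~\ref{ass:FBI} genuinely holds for every member of $\mathcal{B}_{\Psi_a}$; fortunately, the lower frame bound that defines $\Psi_a$ is designed exactly to rule out the trivial obstruction, so this step is essentially immediate, and the rest reduces to bookkeeping with constants.
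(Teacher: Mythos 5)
Your proposal is correct and follows essentially the same route as the paper: transfer the covering-number bound from $\Psi$ to $\mathcal{B}_{\Psi_a}$ via Lemma~\ref{lem:wave}, obtaining $\log\mathcal{N}(\mathcal{B}_{\Psi_a},r)\le C(2\pi)^{-\frac{3d}{2s}}r^{-1/s}$, and then apply Corollary~\ref{thm:sample_error_prob} directly. The extra verifications you include (compactness and the FBI property of $\mathcal{B}_{\Psi_a}$, and the equivalence of minimizing over $\Psi_a$ versus over $\mathcal{B}_{\Psi_a}$) are sound but are left implicit in the paper's two-line proof, since the corollary's statement already assumes the settings of Corollary~\ref{thm:sample_error_prob}.
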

\begin{proof}
    By Lemma~\ref{lem:wave}, we have
    \[
\log \mathcal{N}\left( \mathcal{B}_{\Psi_a},r \right) \le \log \mathcal{N}\left( \Psi,(2\pi)^{\frac32 d}\,r \right) \le C (2\pi)^{-\frac{3d}{2s}} r^{-1/s}.
    \]
    The result is now a direct consequence of Corollary~\ref{thm:sample_error_prob}.
\end{proof}

We remark that the task of finding an optimally sparsifying wavelet transform by learning the mother wavelet has been recently studied in relation to applications both on 1D and 2D signals (see \cite{recoskie2018learning,recoskie2018learning2,frusque2022learnable}).

\section{Numerical implementation and experiments}
\label{sec:numerics}

In this section, we discuss numerical strategies through which the bilevel problem defined by \eqref{eq:bilevel} can be approximately solved, and present some related experiments aiming at validating the theoretical findings of the paper, showcasing a competitive comparison with alternative techniques (such as dictionary learning), and illustrating applications to challenging ill-posed problems.

Throughout this section, we assume that the spaces $X$ and $Y$ are finite-dimensional (and, in particular, we let $X = \R^n$). As a consequence, since by Assumption \ref{ass:stat} the covariance $\Se$ is injective,  the term $\displaystyle \frac{1}{2}\| \Se^{-1/2}y\|_Y^2$ in \eqref{eq:xhat} (which is irrelevant for optimization purposes) can be added to \eqref{eq:xhat}, obtaining the following simplified expression of the inner minimization problem: 
\begin{equation}
R_B(y) = B \hat{u}_B = B \argmin_{u \in \R^n} \Big\{ \frac{1}{2} \| \Se^{-1/2}(A\Bs u-y) \|_Y^2 + \| u \|_{1} \Big\}.
\label{eq:easy_xhat}    
\end{equation}
When $\Se = \sigma^2 I$, this reduces to the familiar form of Lasso regression with $\ell^1$ regularization parameter $1/\sigma^2$. Problem~\eqref{eq:easy_xhat} makes clear that the regularized solution $\hat x_B=R_B(y)$ of Problem~\eqref{eq:xhat} is sparse with respect to the basis associated with the invertible matrix $B$.

 In particular, the expression in \eqref{eq:easy_xhat} is known as the \textit{synthesis} formulation of such a problem, as opposed to the \textit{analysis} one in which the transform operator appears within the 1 norm. 

\subsection{On the numerical resolution of the bilevel problem} \label{subsec:algo}

We now consider the bilevel problem of minimizing the empirical loss \eqref{eq:risk} for a fixed dataset $\{(x_j,y_j)\}_{j=1}^m$ subject to the inner problem \eqref{eq:easy_xhat}. Such a problem is extremely challenging, since the inner minimization problem does not have an explicit formula for the solution, and the first-order optimality conditions associated with it are non-differentiable with respect to the parameter $B$. Among the most effective strategies developed for the solution of this bilevel problem, \cite{horesh-haber-2009} proposed a local sensitivity analysis of the inner problem's solution, leading to an effective algorithm, of which we consider a simple but fruitful relaxation. On the other side, in our numerical experiments, more recent methods based on automatic differentiation and neural networks proved to be unsuccessful for the general formulation of the problem (leading to hard-to-train unrolled or deep equilibrium architectures) and show good results only for the denoising problem, as we discuss in detail below. For this reason, in this subsection, we first propose a specific strategy suited for the denoising problem, and then discuss an approximate method to tackle the general formulation of the problem. 

\paragraph{An exact formula for a denoising problem}
Let us consider the case in which $X = Y = \R^n$, the noise covariance is simply $\Se = \sigma^2 I$, and the forward operator is $A = I$, so that the inverse problem reduces to a denoising task. Assume moreover that the minimization of the empirical loss in \eqref{eq:empiricaltarget} is performed on the space $\mathcal{B} = \{B = \alpha Q:\ 0< \underline{\alpha}\leq \alpha \leq \overline{\alpha},\ Q \in \R^{n\times n}  \text{ orthogonal}\}$ for fixed $0<\underline{\alpha}\leq\overline{\alpha}$. In this case, the inner problem \eqref{eq:easy_xhat} admits the following explicit solution:
\begin{equation}
R_B(y) = R_{\alpha Q}(y) = Q S_{\frac{\sigma^2}{\alpha}}(Q^T y),
\label{eq:inner_denoise}    
\end{equation}
where $S_\beta(u)$ denotes the (component-wise) soft-thresholding operator  $[S_{\beta}(u)]_k = \operatorname{sign}(u_k)(|u_k|-\beta)^+$. Although this expression is non-differentiable with respect to $\alpha$ and $Q$, it can be successfully treated via subgradient-based schemes.
As a result, we can employ automatic differentiation tools to iteratively update the parameters $\alpha$ and $Q$ in \eqref{eq:inner_denoise} so as to minimize the following loss:
\[
\mathcal{L}_1(\alpha Q) = \widehat{L}(\alpha Q) + \frac{\mu_{\rm reg}}{2}(\| Q^T Q - I\|_{\text{Fro}}^2 + \| Q Q^T - I\|_{\text{Fro}}^2 ),
\]
where $\|\cdot\|_{\text{Fro}}$ is the Frobenius norm and the regularization term is designed to promote the orthogonality of the matrix $Q$, instead of employing a projection on the Stiefel manifold at each iteration.

\paragraph{An approximate solver based on $\ell^1$-norm relaxation}
We now consider a strategy to treat the bilevel problem for general forward operators, which may also be applied when the matrix A is not square.
One way to overcome the non-differentiability of the outer loss with respect to $B$ is to consider a slightly modified version of the inner problem, namely:
\begin{equation}
R_B^\nu(y) = B \hat{u}^\nu_B(y) = B \argmin_{u \in \R^n} \Big\{ \frac{1}{2} \| \Se^{-1/2}(A\Bs u-y) \|_Y^2 + \| u \|_{1,\nu} \Big\},
\label{eq:easy_xhatnu}
\end{equation}
being $\| u \|_{1,\nu} = \sum_{i=1}^n \sqrt{u_i^2+\nu^2}$ and $\nu>0$ a small parameter such that the solution of Problem~\eqref{eq:easy_xhatnu} is close to the exact solution $R_B(y)$. For the ease of notation, we assume here that $\Se=\sigma^2 I$. Thanks to the proposed relaxation, it is possible to characterize $R_B^\nu(y)$ by the first-order optimality condition of \eqref{eq:easy_xhatnu}, namely
\[
B^TA^T(AB\hat{u}^\nu_B(y) - y) + \sigma^2 w_\nu(\hat{u}^\nu_B(y)) = 0,
\]
being $[w_\nu(u)]_k = \frac{u_k}{\sqrt{u_k^2 + \nu^2}}$. Finally, by differentiating such an expression, we can compute the first-order variation of $\hat{u}^\nu_B$ with respect to $B$ along a direction $\Theta \in \R^{n \times n}$ as follows:
\[
(\hat{u}^\nu_B)'[\Theta] = (B^T A^TA B + \sigma^2 W_\nu(\hat{u}^\nu_B))^{-1}(\Theta^TA^T(y - AB \hat{u}^\nu_B) - B^T A^TA\Theta \hat{u}^\nu_B) 
\]
with $W_\nu(u) = \operatorname{diag}\Big(\frac{\nu^2}{(u_k^2 + \nu^2)^{3/2}}\Big)$. This allows us to compute the gradient of the following regularized loss:
\[
\mathcal{L}_2(B) = \widehat{L}(B) + \frac{\mu_{\text{reg}}}{2} \| B \|_{\text{Fro}}^2\ ,
\]

\[
\nabla\mathcal{L}_2(B) = \frac{2}{m} \sum_{j=1}^m \Big((B\hat{u}^\nu_B(y_j) - x_j -A^TAB\beta_j)\hat{u}^\nu_B(y_j)^T - (A^T(AB\hat{u}^\nu_B(y_j) -y_j))\beta_j^T \Big)+ \mu_{\rm reg} B
\]
being $\beta_j = (B^T A^TA B + \sigma^2 W_\nu(\hat{u}^\nu_B(y_j)))^{-1} B^T(B\hat{u}^\nu_B(y_j) - x_j)$, and can be employed by any first-order scheme for the minimization of $\mathcal{L}_2$. 
In this case, regularization is not needed to enforce desirable properties of the matrix $B$, such as orthogonality.
Instead, we include the quadratic term $\frac{\mu_{\rm reg}}{2} \| B \|_{\rm Fro}^2\ $ to guarantee the stability of the optimization scheme and reduce the risk of overfitting.

\subsection{Numerical experiments}

We present three numerical experiments on synthetic datasets, each with a specific goal. First, a 1D denoising problem is used to numerically validate the theoretical sample error decay from Section \ref{sec:stat}. Second, we compare our supervised strategy against dictionary learning on a 2D denoising task. Third, we demonstrate the effectiveness of the relaxation-based algorithm from Section \ref{subsec:algo} for 1D deblurring.

\paragraph{Example 1: a 1D-denoising problem}

For this experiment, we construct datasets of 1D discrete signals $x_j \in\R^n$ with $n=100$, such that each of them has only $s =\{8,9,10\}$ (randomly selected) non-zero entries with values in the range $[-1,-0.5]\cup[0.5,1]$, so that each signal $x_j$ is sparse with respect to the canonical basis of $\R^n$. The observed vectors are $y_i = x_j + \epsilon_j$, where $\epsilon_j \sim \mathcal{N}(0,\sigma^2 I)$ and $\sigma = 0.25$, resulting in a rather low average value of the PSNR between $y_j$ and $x_j$ (approx.\ $18.10$). 

In this simplified scenario, the ground truth signals $\{x_j\}$ are sparse with respect to the canonical basis by construction. This provides strong prior knowledge, allowing us to assume that an optimal synthesis operator $B^\star$ (a minimizer of the expected loss $L$) should belong to the class of scaled identity matrices, $B = \alpha I$. This ansatz simplifies the problem to finding the optimal regularization parameter for a standard Lasso denoiser. We can therefore establish a strong benchmark by finding the optimal operator within this simplified class, $B^\star = \alpha^\star I$. We identify the optimal $\alpha^\star$ by minimizing the expected loss $L(\alpha I)$, which we approximate by performing a line search on the empirical loss $\widehat{L}(\alpha I)$ computed over a very large, separate dataset (distinct from the training sets used to find $\widehat{B}$). This $B^\star$ serves as a ``near-optimal'' benchmark to measure the excess risk $L(\widehat{B}) - L(B^\star)$.

In order to visualize the decay of the sample error as the size of the training set grows, we first select the sizes $m \in \{10^3, 2\cdot10^3,6\cdot 10^3,10^4,2\cdot 10^4,6\cdot 10^4,10^5\}$ and, for each size, we construct $K=10$ different training sets. We then minimize the loss $\mathcal{L}_1$ associated to each set as discussed in Section \ref{subsec:algo}, initializing the algorithm with a random orthogonal matrix and performing a maximum of \texttt{n\_epochs} = 10000 iterations of Adam with a stepsize \texttt{learning\_rate} = $10^{-3}$ and choosing $\mu_{\rm reg}=10^{-5}$. We obtain a different $\widehat{B}$ for each experiment, and compute the sample error $L(\widehat{B})-L(B^\star)$, approximating the expected loss with the empirical average on a large test set. Finally, for each value of $m$ we average the sample error across the $K$ repeated experiments, and illustrate the dependence of the resulting quantity on $m$ in Figure \ref{fig:1}.
\begin{figure}
    \centering
    \includegraphics[width=0.5\linewidth]{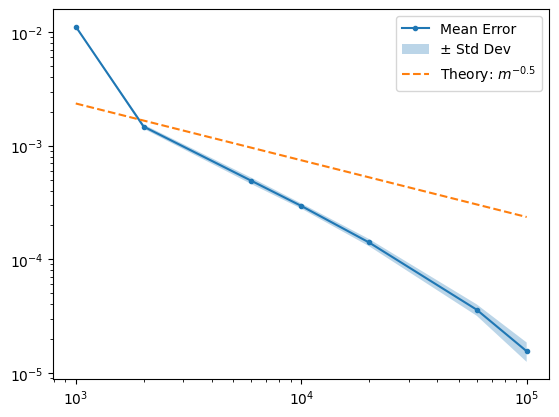}
    \caption{Decay of the average sample error for increasing sample size.}
    \label{fig:1}
\end{figure}

As shown in Figure~\ref{fig:1}, as the sample size increases, the sample error (blue line) decreases faster than the theoretical bound \eqref{eq:tail} (the orange line when $s\to\infty$). This behavior demonstrates the effectiveness of the approach proposed in this work.  The improvement of the decay rate in the numerical experiments might be the result of many factors, including the finite-dimensional (and simple) nature of the considered problem, or some specific advantages induced by the use of the quadratic MSE loss. This result opens the way for further theoretical investigations, possibly in the spirit of \cite[Theorem C*]{cucker2002mathematical}.

\paragraph{Example 2: a 2D-denoising problem}
In this experiment, we consider the task of denoising images $y_j = x_j + \eps_j$ synthetically generated by adding independent noise realizations $\eps_j \sim \mathcal{N}(0,\sigma^2 I)$ (with $\sigma = 0.1$) to images of size $128 \times 128$ containing ellipses with random positions, aspect ratios, orientations and intensities. 
To reduce the computational burden of learning an operator $B \in \R^{n \times n}$ with $n=128^2$, we consider a patch-based denoiser. Namely, from each image we extract non-overlapping patches of size $p \times p$ (with $p=8$), and apply our supervised strategy to minimize $\mathcal{L}_1$ on the resulting dataset of patches, thus learning an operator $B \in \R^{p^2 \times p^2}$. We consider a training dataset of $m=1000$ images (i.e., $256000$ patches), computing a maximum of \texttt{n\_epochs} = 20000 iterations of Adam with a stepsize \texttt{learning\_rate} = $10^{-3}$ and choosing $\mu_{\rm reg}=10^{-3}$. The denoised patches are then recollected to form the denoised images. In Figure \ref{fig:2} we report the learned operator $Q$, being $B = \alpha Q$. To ease the visualization, we represent each column of $Q$ (i.e., each element of the learned basis) not as a vector in $\R^{p^2}$ but as a patch in $\R^{p \times p}$. The learned basis resembles a wavelet one, reflecting the cartoon-like nature of the ellipses' images.
\begin{figure}
    \centering
\includegraphics[width=0.95\linewidth,trim={3 3 2 3}, clip]{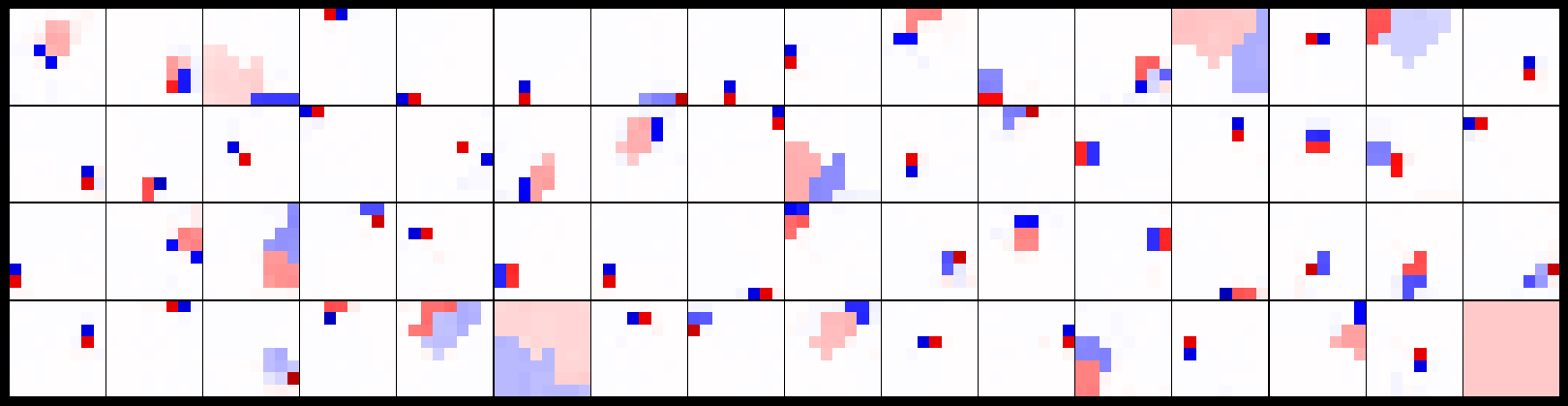}
    \caption{Learned sparsifying transform. Each column of $B$ is represented as an $8 \times 8$ patch.}
    \label{fig:2}
\end{figure}

Since in this case we do not have an insight into an optimal $B^\star$ to use as a benchmark, we compare our strategy with Dictionary Learning (see Appendix \ref{app:DL} for a theoretical comparison). In particular, employing the online algorithm developed in \cite{mairal2009online}, we construct a sparsifying dictionary $D_{\rm DL}$ leveraging the dataset of ground truth patches extracted from $\{x_j\}_{i=1}^{m}$ and later select $B = \alpha D_{\rm DL}$.  Here, $\alpha$ controls the intensity of the regularization depending on the noise level and is selected by a line-search technique, minimizing the empirical risk of the denoising task. Compared with our strategy, this Dictionary-Learning-based technique requires to specify two parameters: in addition to $\alpha$, indeed, we must also select the internal regularization parameter $\mu_{\rm DL}$ of the online algorithm of \cite{mairal2009online}
(which controls the level of desired sparsity induced by the dictionary $D_{\rm DL}$). Tuning $\mu_{\rm DL}$, which can be done again via line-search, is computationally expensive and not straightforward, as the Dictionary Learning stage is not aware of the physics of the inverse problems we aim at solving, but only of the properties of the ground-truth images. Instead, our technique does not require tuning any parameters, learns simultaneously the basis and the regularization parameter, and achieves slightly better performance than the one associated with the best choice of parameters in Dictionary Learning (in this example, $\mu_{\rm DL} = 0.06$ and $\alpha = 0.1537$). The results are reported in Table~\ref{tab:1} and in Figure~\ref{fig:3}.
\begin{table}[h!]
    \centering
\begin{tabular}{l|c|c|c}
 & Noisy data & Dictionary Learning & Our Method \\ \hline
MSE & 1.0001E-02 & 1.78891E-03 & \textbf{1.6403E-03} \\ 
\hline
\end{tabular}
\caption{Comparison of the average MSE on the test set between the ground truths and the original noisy data, the reconstructions given by Dictionary Learning, and by our supervised strategy.}
    \label{tab:1}
\end{table}

\begin{figure}
    \centering
    \includegraphics[width=0.9\linewidth]{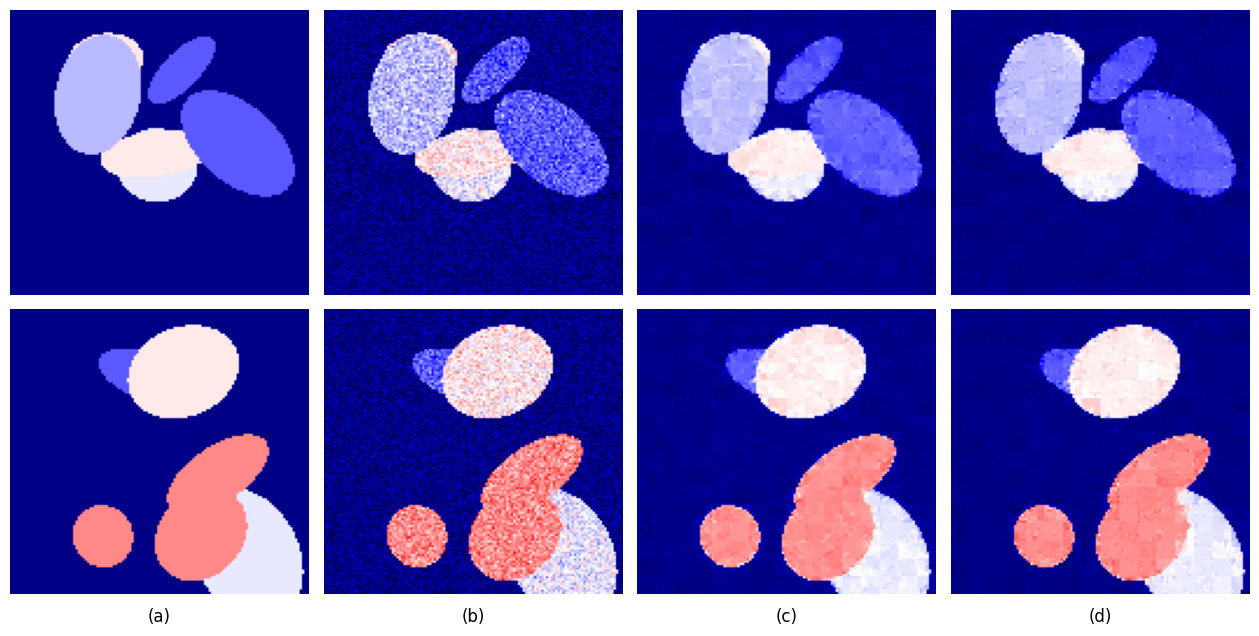}
    \caption{Visual comparison between some reconstructions of the images of the test set. (a): ground truths; (b): noisy images; (c): denoised images via Dictionary Learning; (d): denoised images via our strategy.}
    \label{fig:3}
\end{figure}

\paragraph{Example 3: a 1D-deblurring problem} 

We now consider the same 1D signals $\{x_j\}_{j=1}^m$ as in Example 1, but generate the measurements $\{y_j\}_{j=1}^m$ according to a different model: namely, $y_j = k \ast x_j + \eps_j$, being $\ast$ the 1D discrete convolution operation with reflecting boundary conditions, $k$ a Gaussian filter of standard deviation $\sigma_{\rm blur} = 1.2$, and $\epsilon_j \sim \mathcal{N}(0,\sigma^2 I)$ i.i.d. with $\sigma = 0.05$. This is a deblurring inverse problem, where the forward operator $A \in \R^{n\times n}$ is the circulant matrix associated with the filter $k$. We apply the relaxation-based algorithm proposed in Section \ref{subsec:algo}, choosing a small relaxation parameter, namely $\nu=10^{-3}$. The size of the training set is $m=20000$. We employ the Adam algorithm, initialized with a random orthogonal matrix, and leverage the explicit expression of the gradient of the loss, specifying a maximum number \texttt{n\_epochs} = 5000 of iterations, a stepsize \texttt{learning\_rate} = $1$, and $\mu_{\rm reg}=0.03$. The results of the training process are reported in Figure~\ref{fig:4}, where we show two examples of signals from the test set, together with their blurry, noisy observations and the reconstructions via our learned operator $R_{\widehat{B}}$. In the same figure, we report the learned matrix $\widehat{B}$: as expected, the learned basis $\widehat{B}$ is just a scaled version of the canonical basis (the identity matrix), but with its columns randomly shuffled (permuted) and flipped (sign swaps). This is the natural result because the signals $x_j$ were designed to be sparse in that exact basis. The learned scaling factor simply adjusts the effective strength of the $\ell^1$ regularization.
\begin{figure}
    \centering
    \includegraphics[width=0.99\linewidth]{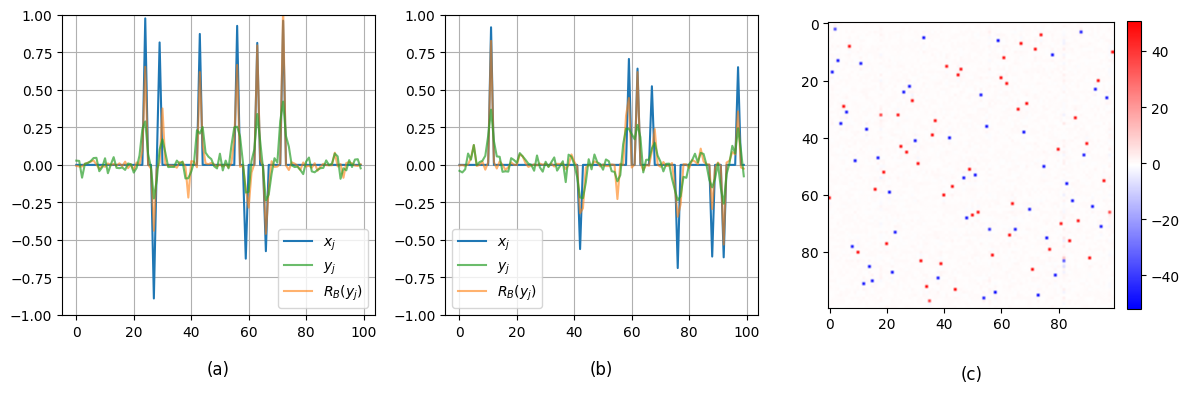}
    \caption{Results of the proposed technique for the 1D deblurring problem. Columns (a) and (b): comparison between the ground truth signals $x_j$, their blurred and noisy versions $y_j$, and their reconstructions $R_B(y_j)$ for two different examples. Column (c): the learned matrix $\widehat{B}$.}
    \label{fig:4}
\end{figure}

As in Example 1, we look for a benchmark $B$ based on the ansatz $B^\star = \alpha^\star I$, selecting $\alpha^\star$ via line-search. In Table \ref{tab:2}, we compare the values of $L(\widehat{B})$ and $L(B^\star)$ (approximated on a sufficiently large test set). We observe that our technique achieves a comparable accuracy without requiring any prior information on the basis with respect to which the signals are sparse.
\begin{table}[h!]
    \centering
\begin{tabular}{l|c|c|c}
 & Noisy, blurry data & Optimal choice, $B^\star$ & Our Method, $\widehat{B}$ \\ \hline
MSE & 3.4108E-02 & \textbf{1.7026E-02} & 1.7312E-02 \\ 
\hline
\end{tabular}
    \caption{Comparison of the average MSE on the test set between the ground truths and the original noisy and blurry data, the reconstruction given by $B^\star$, and our supervised strategy.}
    \label{tab:2}
\end{table}

\begin{appendices}

\section{White noise}\label{app:white_noise}

As we mentioned in Section~\ref{sec:intro}, 
the assumption on $\Se$ does not allow us, in principle, to consider white noise in infinite dimensions, for which the covariance $\Se=\mathrm{Id}$ is not trace-class, but only bounded. However, this situation can be considered by embedding $Y$ into a larger space. Instead of providing a general and abstract construction, we prefer to discuss two particular examples.  For more details, see \cite[Section~A.1]{alberti2021learning}.

\begin{example}\label{white_noise}
Let $\Omega$ be a connected bounded open subset of $\R^d$, and $\{\eps_y\}_{y\in L^2(\Omega)}$ be a Hilbert random process such that 
\[ \E{[\eps_y]}=0\qquad 
\E{[\eps_y \eps_{y'}]}=\<y,y'\>_{L^2(\Omega)}, \qquad y,y'\in L^2(\Omega),
\]
which is the classical model for white noise, since the covariance operator of $\eps$ is the identity. Assume that $\Omega$ satisfies a cone condition and fix $s>d/2$. Then the Rellich-Kondrachov theorem \cite[Theorem 6.3]{adams2003sobolev} implies that the Sobolev space $H^s(\Omega)$ is embedded into $C_b(\Omega)$, the space of bounded continuous functions on $\Omega$, and
the canonical inclusion $\iota\colon H^s(\Omega)\to L^2(\Omega)$ is a compact operator. The above embedding implies that $H^s(\Omega)$ is a reproducing kernel Hilbert space with bounded reproducing kernel $K\colon\Omega\times \Omega\to \R$. An easy calculation shows that
\[ \tr(\iota^*\circ\iota)=\int_U K(x,x) dx <+\infty.\]
This fact implies that in the  Gelfand triple
\begin{equation}
    H^s(\Omega)\xrightarrow[]{\;\;\iota\;\;} L^2(\Omega) \xrightarrow[]{\;\;\iota^*\;\;} H^{-s}(\Omega),\label{gelfand}
\end{equation}
both $\iota$ and $\iota^*$ are Hilbert-Schmidt operators.
Hence, it is possible to define a random variable $\widehat{\eps}$ taking values in $H^{-s}(\Omega)$ such that 
\[ \<\widehat{\eps},y\>_{H^{-s},H^s}=\eps_{\iota(y)},\qquad y\in H^s(\Omega).  \]
It is immediate to check that 
\[
\E{[ \<\widehat{\eps},y\>_{H^{-s},H^s}\ \<\widehat{\eps},y'\>_{H^{-s},H^s}]}=\<\iota(y),\iota(y')\>_{L^2(\Omega)},
\]
so that the covariance operator of $\widehat{\eps}$ is $\iota^*\circ\iota$, which is a trace class operator. Thus,  $\widehat{\eps}$ is a square-integrable random variable in $H^{-s}(\Omega)$. To apply the results of our paper, it is enough to set $Y=H^{-s}(\Omega)$ and to lift the inverse problem \eqref{eq:invprob} to $H^{-s}(\Omega)$:
\[
\iota^*(y) = (\iota^*\circ A)x + \widehat\varepsilon.
\]
This requires identifying $H^{s}(\Omega)$ and $H^{-s}(\Omega)$ using the Riesz lemma. Note that this identification is not standard, since it is not compatible with the double embedding of~\eqref{gelfand}. However, the intermediate space $L^2(\Omega)$ does not matter once $\widehat{\eps}$ is defined. 
\end{example}

In the next example, we consider the sequence space $\ell^2$, as a prototypical Hilbert space (once an orthonormal basis has been fixed).

\begin{example}
Let $X$ be a Hilbert space and set $Y=\ell^2$. Let $A\colon X\to\ell^2$ be a bounded and linear map, and consider the inverse problem \eqref{eq:invprob} 
\begin{equation}\label{eq:inv-white}
y = Ax+\varepsilon.
\end{equation}
Let $\{e_n\}_{n\in\N}$ be the canonical basis of $\ell^2$, defined by $(e_n)_i = \delta_{i,n}$. We consider the case when $\varepsilon$ is a white Gaussian noise with variance $\sigma^2$, namely,
\begin{equation*}
\varepsilon = \sum_{n\in\N}\varepsilon_n e_n,
\end{equation*}
where the  random variables $\varepsilon_n$ are i.i.d.\ scalar Gaussians with mean zero and variance $\sigma^2$, i.e.\  $\varepsilon_n\sim\mathcal{N}(0,\sigma^2)$. The above expression for $\varepsilon$ is only formal, because the series is divergent in $\ell^2$ with probability 1. As a consequence, \eqref{eq:inv-white} is not well-defined in $\ell^2$. However, writing \eqref{eq:inv-white} in components with respect to the orthonormal basis $\{e_n\}_{n\in\N}$ yields 
\begin{equation}\label{eq:family}
    y_n = (Ax)_n + \varepsilon_n,\qquad n\in\N,
\end{equation}
    where we wrote $Ax = \sum_n (Ax)_n e_n$. This is a family of well-defined scalar equations. 
    
Let us now see how it is possible to reformulate this as a problem in a Hilbert space.    
    Equivalently, we can rewrite \eqref{eq:family} as
    \begin{equation}\label{eq:family2}
    \frac{y_n}{n^s} = \frac{(Ax)_n}{n^s} + \frac{\varepsilon_n}{n^s},\qquad n\in\N,
\end{equation}
for some $s>\frac12$. Let us introduce the embedding
$
\iota^*\colon\ell^2\to\ell^2$ defined by $\iota^*(e_n)= e_n/n^s
$
and the  random variable
\begin{equation*}
\widehat\varepsilon = \sum_{n\in\N}\frac{\varepsilon_n}{n^s} e_n.
\end{equation*}
Note that $\E{[\norm{\widehat\eps}^2_2]}<+\infty$, and $\widehat\varepsilon\in \ell^2$ with probability 1. Thus, we can 
 rewrite \eqref{eq:family2} as
\[
\iota^*(y) = \iota^*(Ax) + \widehat\varepsilon.
\]
This equation is meaningful in $\ell^2$, and has the same form as the original inverse problem \eqref{eq:inv-white}. 
\end{example}

\section{Connections with Dictionary Learning and unsupervised strategies} \label{app:DL}

Although our approach shares the same aim of dictionary learning, i.e., promoting the sparse representation of some ground truths by selecting a suitable synthesis operator, it is possible to outline some substantial differences. The key observation is that the optimal operator sought in dictionary learning is independent of the forward operator and the noise distribution, and depends only on the distribution of $x$. On the contrary, the optimal $\BS$ in \eqref{eq:bilevel} depends on both $\varepsilon$ and $A$, yielding, in general, a smaller MSE and, consequently, better statistical guarantees for the solution of the inverse problem. 

Let us briefly introduce the standard dictionary learning framework \cite{tosic-frossard}.
If, instead of the training dataset $\vzv = \{(x_j,y_j)\}_{j=1}^m$ employed in \eqref{eq:empiricaltarget} to discretize \eqref{eq:bilevel}, only a collection of ground truths $\{x_j\}_{j=1}^m$ is available, i.i.d. sampled from the (unknown) marginal $\rho_x$, we may consider the following unsupervised technique: let
$\wt{R}_B$ a sparsity-promoting regularizer of the form \eqref{eq:xhat}-\eqref{eq:RB-def} associated with $A = \operatorname{Id}$ and without assuming the knowledge of the covariance $\Se$, namely:
\begin{gather*}
\wt{R}_B(x) = B \wt{u}_B(x),  \quad \wt{u}_B(x) = \argmin_{u \in \ell^1} \left\{ \frac{1}{2} \|Bu - x \|^2 + \|u\|_{\ell^1}\right\} \\  
    \BS_{DL} \in \argmin_{B \in \mathcal{B}} \left\{ \frac{1}{m} \sum_{j=1}^m \| x_j - \wt{R}_B(x_j) \|^2 \right\}. \label{eq:DL_out}
\end{gather*}

This problem yields a bilevel formulation of the well-known Dictionary Learning problem \cite{peyre-fadili,yang-et-al}.
Our supervised strategy resembles dictionary learning, indeed, let us recall that, according to \eqref{eq:empiricaltarget},
\begin{align*}
    \BS &\in \argmin_{B \in \mathcal{B}} \left\{ \frac{1}{m} \sum_{j=1}^m \| x_j - R_B(A x_j+\varepsilon_j) \|^2 \right\},
\end{align*}
being $R_B$ as in \eqref{eq:RB-def}. However, since $R_B$ is a nonlinear map, differently from our previous work on quadratic regularizers \cite{alberti2021learning}, it is easy to show that the two problems are not equivalent in general. In particular, while $\BS_{DL} $ is independent of the forward operator $A$ and on the noise $\varepsilon$ by construction,  $\BS$ will in general depend on both. We illustrate this with a simple 1D example.

Let $\sigma^2=\mathbb{E}(\varepsilon^2)>0$ denote the variance of the noise (with zero mean) and consider the 1D problem
\[
A x = ax \quad \text{with } a \in (0,\sigma]
\]
and the regularization $B^{-1}x = bx$, $b>0$.  Given an unknown $x^\dagger$ and data $y= Ax^\dagger + \varepsilon = ax^\dagger + \varepsilon$, the Lasso reconstruction is given by
\[
\hat x = \argmin_x \frac12|\sigma^{-1}(A x - y)|^2 +  |bx|
= \argmin_x \frac12| x - y/a|^2 +  \frac{\sigma^2 b}{a^2}|x|.
\]
Setting $\gamma = \frac{a}{\sigma}$, this may be rewritten by using the soft-thresholding operator
 $S_\lambda$ as
 \[
\hat x = S_{\frac{b}{\gamma^2}}\left(x^\dagger +\gamma^{-1}\tilde \varepsilon\right),
 \]
 where $\tilde\varepsilon = \varepsilon/\sigma$ satisfies $\mathbb{E}(\tilde \varepsilon^2)=1$.
Now consider the mean squared error
\[
\text{MSE} = \mathbb{E}_{x, \tilde\varepsilon}\left[\left|S_{\frac{b}{\gamma^2}}\left(x +\gamma^{-1}\tilde \varepsilon\right) -x\right|^2\right].
\]
Note that we consider the minimization of the expected risk, and not of the empirical risk, as it is more significant.
For simplicity, we choose the following zero-mean independent distributions
\[
\mathbb P(x = \pm 1) = \frac{1}{2}, \quad \mathbb P (\tilde\varepsilon = \pm 1) = \frac{1}{2}.
\]
After a series of elementary computations, we can show that 
\begin{equation*}
    \text { MSE }= \begin{cases}\frac{1}{\gamma^4}( b-\gamma )^2 & b \in\left(0, \gamma-\gamma^2\right), \\ \frac{1}{2}\left[1+\frac{1}{\gamma^4}(  b-\gamma)^2\right] & b \in\left[\gamma-\gamma^2, \gamma+\gamma^2\right), \\ 1 & b \geqslant \gamma+\gamma^2.\end{cases}
\end{equation*}
Therefore, the optimal value for the MSE is achieved at $b=\gamma =\frac{a}{\sigma}$ and depends on both $a$ and $\sigma$, namely, on both the forward operator and the noise level. Any unsupervised choice for $b$ would be independent of $a$ and $\sigma$ and give rise, in general, to larger values of the MSE.

\section{Proofs of Section~\ref{sec:perturbation_known}}\label{sec:proofs_example}

\begin{proof}[Proof of Lemma~\ref{lem:Bcompactexample}]
    Every element $B\in\mathcal{B}$ satisfies Assumption~\ref{ass:FBI} because $A$ is injective and $B|_{\ell^2_I}$ is injective for every finite subset $I\subset\N$ by \eqref{eq:Hexample}.
It remains to show that $\mathcal B$ is compact. Since the map 
    \begin{equation}\label{eq:map-lipschitz}
        \LO(\ell^2)\ni U \longmapsto B_0 (\operatorname{Id} + U) \in \LO(\ell^2,X)
    \end{equation}
   is continuous, it is enough to show that
    $\mathcal{H}\subset\LO(\ell^2)$
    is compact. Write $\mathcal{H}=\mathcal{K}\cap\mathcal{C}$, where
    \begin{align*}
        &\mathcal{K} = \{ E_1 T E_2 :  T \in \LO(\ell^2), \; \| T \|_{\LO(\ell^2)}\leq 1\},\\
        &\mathcal{C} =\{K\in\LO(\ell^2): \|(\operatorname{Id} + K)u\|\ge c_I \|u\| \; \text{ for every finite $I\subset\N$ and $u\in \ell^2_I$}\}.
    \end{align*}
The set $\mathcal{K}$ is compact by \cite[Theorem 3]{vala1964}, because  $\left\{   T \in \LO(\ell^2):  \| T \|_{\LO(\ell^2)}\leq 1 \right\}$ is closed.  The set $\mathcal{C}$ is closed, because convergence in operator norm implies pointwise convergence. Hence, $\mathcal{H}=\mathcal{K}\cap\mathcal{C}$ is compact.
\end{proof}

\begin{proof}[Proof of Proposition~\ref{prop:coveringexample}]
    Note that
    \begin{equation}
    \mathcal{N}(\mathcal{B},r; \|\cdot \|_{\LO(\ell^2,X)}) \le \mathcal{N}(\mathcal{H},r;\|\cdot \|_{\LO(\ell^2)}),
        \label{eq:B_step1}
    \end{equation}
since the map in \eqref{eq:map-lipschitz} is Lipschitz with constant $1$ (recall that $\| B_0 \|_{\LO(\ell^2,X)}\leq 1$). Using the notation of the proof of Lemma~\ref{lem:Bcompactexample}, we have $\mathcal{H}=\mathcal{K} \cap \mathcal{C}$, so that $\mathcal{H}\subseteq\mathcal{K}$. Thus
    \begin{equation}
        \mathcal{N}(\mathcal{H},r;\|\cdot \|_{\LO(\ell^2)}) \leq \mathcal{N}(\mathcal{K},r;\|\cdot \|_{\LO(\ell^2)}).
        \label{eq:B_step2}
    \end{equation}

    To further bound the right-hand side of \eqref{eq:B_step2}, we consider the singular value decomposition of $E$. Since $E$ is self-adjoint and compact, we can write its spectral decomposition in terms of its eigenvalues $\{\lambda_n\}$ and eigenvectors $\{e_n\}$, and define its rank-$N$ approximation $E_N$ as follows:
    \begin{equation}\label{eq:EE_N}
    E = \sum_{n=1}^{\infty} \lambda_n e_n \otimes e_n, \quad \qquad E_N = \sum_{n=1}^{N} \lambda_n e_n \otimes e_n=P_NE=EP_N,
\end{equation}
    where we denote by $u \otimes v$ the rank-$1$ operator such that $(u \otimes v)x = \lX v,x \rangle_{\ell^2} u$, being $\lX \cdot, \cdot \rangle_{\ell^2}$ the inner product in $\ell^2$, and
    \(
    P_N=\sum_{n=1}^{N}  e_n \otimes e_n
    \)
    is the orthogonal projection onto $\operatorname{span}\{e_1,\dots,e_N\}$.
    Assuming that the sequence $\{\lambda_n\}$ is decreasingly ordered, we know that $\| E - E_N \|_{\LO(\ell^2)} \leq \lambda_{N+1}$. We now introduce the spaces
    \[
    \mathcal{K}_N = \left\{ K = E_N T E_N: T \in \LO(\ell^2), \ \| T \|_{\LO(\ell^2)}\leq 1 \right\},
    \]
    and prove that the following bound holds: 
    \begin{equation}
    \mathcal{N}(\mathcal{K}, \rho + 2\lambda_{N+1}; \| \cdot \|_{\LO(\ell^2)}) \leq \mathcal{N}(\mathcal{K}_N,\rho; \| \cdot \|_{\LO(\ell^2)})=:{\hat{\mathcal{N}}} .
        \label{eq:B_step3}
    \end{equation}
        In order to prove \eqref{eq:B_step3},  consider a $\rho$-covering $\{K_1, \ldots, K_{\hat{\mathcal{N}}}\}$ of $\mathcal{K}_N$. For any $K = ETE\in\mathcal{K}$, let $K^N = E_N T E_N\in\mathcal{K}_N$, and let $K_i$ be such that $\| K^N - K_i\| \leq \rho$. Then,
    \[ 
    \begin{aligned}
    \| K - K_i \|_\LO &\leq \| K - K^N \|_\LO + \|K^N - K_i \|_\LO \\
    & \leq \| ETE - ETE_N \|_\LO + \| ETE_N - E_N T E_N \|_\LO + \rho \\
    & \leq \| E\|_\LO \| T \|_\LO \| E - E_N\|_\LO + \| E-E_N\|_\LO \| T \|_\LO  \| E_N \|_\LO + \rho \\
    & \leq 2 \lambda_{N+1} + \rho,
    \end{aligned}
    \]
    which shows that $\{K_1, \ldots, K_{\hat{\mathcal{N}}}\}$ is a $(2 \lambda_{N+1} + \rho)$-covering of $\mathcal{K}$.

    In order to bound the covering numbers of $\mathcal{K}_N$, we claim that $\mathcal{K}_N \subset \mathcal{F}_N$, where
    \[
    \mathcal{F}_N = \left\{ K = E T E: T \in \operatorname{HS}(\ell^2), \ \| T \|_{\operatorname{HS}(\ell^2)}\leq \sqrt{N} \right\},
    \]
    and $\operatorname{HS}(\ell^2)$ denotes the class of Hilbert-Schmidt operators from $\ell^2$ to $\ell^2$, namely, the ones for which the singular values are square-summable. In order to show this, take $K \in \mathcal{K}_N$, with $K\!=\!E_N T E_N$ for some $T\!\in\!\LO(\ell^2)$ such that $\| T\|_{\LO(\ell^2)}\!\leq\!1$. 
    Setting
    $T_N\!=\!P_N T P_N$, by \eqref{eq:EE_N} we get
\[
K = E_NTE_N = E P_N T P_N E = E T_N E.
\]
    Note that $T_N$ is a rank-$N$ operator, thus belonging to the Hilbert-Schmidt class. Furthermore,
    \[
    \|T_N\|_{\operatorname{HS}(\ell^2)} = \|P_N T P_N\|_{\operatorname{HS}(\ell^2)}\le \|P_N\|_{\operatorname{HS}(\ell^2)} \|T P_N\|_{\LO(\ell^2)}\le \sqrt{N},
    \]
    because $\|P_N\|_{\operatorname{HS}(\ell^2)}=\sqrt{N}$, $\|T P_N\|_{\LO(\ell^2)}\le 1$ and  $$\|P_N T P_N\|^2_{\operatorname{HS}(\ell^2)} = \sum_{n=1}^\infty \|P_N T P_N e_n \|^2_{\ell^2} \leq \sum_{n=1}^\infty \| P_N T \|^2_{\mathcal{L}(\ell^2)} \| P_N e_n\|_{\ell^2}^2 = \| P_N T \|^2_{\mathcal{L}(\ell^2)} \| P_N \|_{\operatorname{HS}(\ell^2)}^2. $$ This shows that $K\in \mathcal{F}_N$, as claimed. By a simple scaling argument, we have 
    \begin{equation} \label{eq:B_step4}
        \mathcal{N}(\mathcal{K}_N,\rho;\| \cdot \|_\LO) \leq \mathcal{N}(\mathcal{F}_N,\rho; \| \cdot \|_\LO) = \mathcal{N}(\mathcal{F}_1,\rho N^{-1/2}; \| \cdot \|_\LO).
    \end{equation}
    
    We finally have to estimate the covering numbers $\mathcal{N}(\mathcal{F}_1,\varrho;\| \cdot \|_\LO)$. Let us define
    \[
    F_1 = \left\{ T \in \operatorname{HS}(\ell^2): \ \| T \|_{\operatorname{HS}(\ell^2)}\leq 1 \right\},
    \] 
    which entails that $\mathcal{F}_1 = j(F_1)$, where $j$ is the (compact) embedding $j\colon \operatorname{HS}(\ell^2) \rightarrow \operatorname{HS}(\ell^2)$ defined as $j( T) = ETE$.
  Since $\| \cdot \|_{\LO(\ell^2)} \le \| \cdot \|_{\operatorname{HS}(\ell^2)}$, a $\varrho-$covering of $\mathcal{F}_1$ with respect to $\| \cdot \|_{\operatorname{HS}(\ell^2)}$ is also a $\varrho$-covering with respect to $\| \cdot \|_{\mathcal{L}(\ell^2)}$. Thus,
    \begin{equation}\label{eq:boh}
    \mathcal{N}(\mathcal{F}_1,\varrho;\| \cdot \|_{\LO(\ell^2)}) \leq \mathcal{N}(\mathcal{F}_1,\varrho; \| \cdot \|_{\operatorname{HS}(\ell^2)}).
    \end{equation}
    The quantity $\mathcal{N}(\mathcal{F}_1,\varrho; \| \cdot \|_{\operatorname{HS}(\ell^2)})$
    is the covering number of the image of the unit sphere $F_1$ of the Hilbert space $\operatorname{HS}(\ell^2)$ through the embedding $j$, and is linked to the \textit{entropy numbers} $\varepsilon_k(j)$: indeed, according to the definitions in \cite[Chapter 1]{cast90}, one clearly sees that 
    \[
    \mathcal{N}(\mathcal{F}_1, \varrho; \| \cdot \|_{\operatorname{HS}(\ell^2)}) \leq k \quad \iff \quad \varepsilon_k(j) \leq \varrho.
    \]
    In order to estimate the entropy numbers of $j$, we can rely on its singular values, thanks to \cite[Theorem~3.4.2]{cast90}. In view of the decay $\lambda_n \lesssim n^{-s}$ of the eigenvalues of $E$, following the proof of \cite[Lemma A.9]{alberti2021learning} we can show that the singular values of $j$ decay as $n^{-s'}$, for any $s'< s$. Then, by the same argument used in the proof of \cite[Lemma A.8]{alberti2021learning}, we have that $\varepsilon_k(j) \lesssim (\log k)^{-s'}$, which implies $\mathcal{N}(\mathcal{F}_1, (\log k)^{-s'}; \| \cdot \|_{\operatorname{HS}(\ell^2)})\leq k$ and ultimately
    \begin{equation}
        \log \mathcal{N}(\mathcal{F}_1, \varrho; \| \cdot \|_{\operatorname{HS}(\ell^2)}) \lesssim \varrho^{-1/s'}  .
        \label{eq:B_step5}
    \end{equation}
    
    We can now conclude the proof: by \eqref{eq:B_step1}, \eqref{eq:B_step2} and \eqref{eq:B_step3}, setting $\rho = \frac{r}{2}$ and choosing $N$ such that $\lambda_{N+1} = \frac{r}{4}$ (which implies $(N+1)^{-s} \gtrsim \frac{r}{4}$, hence $N \lesssim r^{-1/s}$), we get
    \[
    \mathcal{N}(\mathcal{B},r; \|\cdot \|_{\LO(\ell^2,X)}) \lesssim \mathcal{N}(\mathcal{K}_{N},r;\| \cdot\|_{\LO(\ell^2)}).
    \]
    Next, by \eqref{eq:B_step4}, we obtain
    \[
     \mathcal{N}(\mathcal{B},r; \|\cdot \|_{\LO(\ell^2,X)}) \lesssim \mathcal{N}(\mathcal{F}_1, r N^{-1/2}; \| \cdot \|_{\LO(\ell^2)}),
    \]
    and by  \eqref{eq:boh} and \eqref{eq:B_step5}  we finally
    obtain 
    \[
     \log\mathcal{N}(\mathcal{B},r; \|\cdot \|_{\LO(\ell^2,X)}) \lesssim \bigl(rN^{-1/2}\bigr)^{-1/s'}
     \lesssim r^{-\frac{2s+1}{2ss'}}.
    \]
    This shows   \eqref{eq:covering_example_comp}, and the proof is concluded.
\end{proof}

\end{appendices}
\section*{Acknowledgments}
This material is based upon work supported by the Air Force Office of Scientific Research under award
FA8655-23-1-7083 and was cofunded by the European Union (ERC, SAMPDE, 101041040, ERC, SLING, 819789,
ERC, AdG project 101097198, and Next Generation EU - project FAIR, PE0000013, spoke 8). Views and opinions
expressed are, however, those of the authors only and do not necessarily reflect those of the European Union or the
European Research Council. GSA, EDV, LR and MS are members
of INdAM (Istituto Nazionale di Alta Matematica). TH and ML were supported by the
Research Council of Finland, decision numbers 348504, 353094, 359182 and 359183 (FAME flagship).

\bibliographystyle{siamplain}
\bibliography{references}

\end{document}